\definecolor{poscorr}{RGB}{230,255,230}    %
\definecolor{negcorr}{RGB}{255,230,230}    %
\definecolor{sig1}{RGB}{255,250,200}       %
\definecolor{sig2}{RGB}{255,220,120}       %
\definecolor{sig3}{RGB}{255,180,60}        %
\definecolor{sig4}{RGB}{255,130,0}         %
\definecolor{nonsig1}{RGB}{240,240,240}  %
\definecolor{nonsig2}{RGB}{220,220,220}  %
\definecolor{nonsig3}{RGB}{200,200,200}  %
\theoremstyle{plain}
\newtheorem{theorem}{Theorem}[section]
\theoremstyle{definition}
\theoremstyle{remark}
\title{Grokking in LLM Pretraining? Monitor\\Memorization-to-Generalization without Test}
\author{
Ziyue Li$^{1}$, Chenrui Fan$^{1}$, Tianyi Zhou\\
$^{1}$Department of Computer Science, University of Maryland, College Park \\
\texttt{litzy619@umd.edu, cfan42@umd.edu, tianyi.david.zhou@gmail.com}
}
\newcommand{\corrcell}[2]{\cellcolor{#1}#2}
\newcommand{\posmetric}[1]{\cellcolor{poscorr}#1}
\newcommand{\negmetric}[1]{\cellcolor{negcorr}#1}
\begin{document}

\maketitle

\begin{abstract}

    This paper presents \textit{the first study of grokking in practical LLM pretraining}. Specifically, we investigate when an LLM memorizes the training data, when its generalization on downstream tasks starts to improve, and what happens if there is a lag between the two. Unlike existing works studying when a small model generalizes to limited and specified tasks during thousands epochs' training on algorithmic data, we focus on a practical setting for LLMs, i.e., 
    near single-pass
    pretraining of next-token prediction on a cross-domain, large-scale corpus, and generalization on diverse benchmark tasks covering math/commonsense reasoning, code generation, and domain-specific retrieval. 
    Our study, \textit{for the first time, verifies that grokking still emerges in pretraining mixture-of-experts (MoE) LLMs}, though different local data groups may enter their grokking stages asynchronously due to the heterogeneity of their distributions and attributions to others. 
    To find a mechanistic interpretation of this local grokking, we investigate the dynamics of training data's pathways (i.e., expert choices across layers in MoE). Our primary discovery is that \textit{the pathways evolve from random, non-smooth across layers, instance-specific to more structured and transferable across samples}, despite the converged pretraining loss. This depicts a transition from memorization to generalization. 
    Two novel metrics are developed to quantify these patterns: one computes the pathway similarity between samples, while the other measures the consistency of aggregated experts between subsequent layers for each sample. 
    These training data based metrics induce near zero cost but can faithfully track and monitor the generalization of LLMs on downstream tasks, reducing reliance on costly instruction tuning and benchmark evaluations.
    We also ground our findings in a theoretical analysis of one-layer MoE, showing that more structured pathways improve the generalization bound. 
    \looseness-1 
\end{abstract}

\section{Introduction}

Large language models (LLMs) with Transformer~\citep{attention} architectures have demonstrated that pretraining for a general-purpose task such as autoregressive prediction on a large corpus can gain powerful generalization across diverse downstream tasks and domains~\citep{brown2020languagemodelsfewshotlearners, touvron2023llamaopenefficientfoundation, bai2023qwentechnicalreport, zhao2025surveylargelanguagemodels}. Capabilities such as prompting~\citep{wei2023chainofthoughtpromptingelicitsreasoning}, reasoning~\citep{huang2023reasoninglargelanguagemodels, ahn2024largelanguagemodelsmathematical}, and in-context learning also emerge during training. However, a counter-intuitive ``grokking''~\citep{power2022grokking, nanda2023progress, tan2024understandinggrokkingrobustnessviewpoint, lv2025languagemodelsgrokcopy} has been widely observed in training Transformer models: the generalization starts to improve or even sharply rises long after training loss converged or plateaued---an indicator of overfitting in conventional machine learning. While grokking indicates a transition from memorization to generalization~\citep{huang2024unifiedviewgrokkingdouble}, it challenges our fundamental understanding of training dynamics and the underlying mechanism of generalization.

Despite recent attempts to explain the generalization induced by grokking, they have been largely confined to small models trained for hundreds to thousands of epochs on algorithm-synthesized small-scale data with generalization to limited and highly specified tasks~\citep{power2022grokking, liu2022towards, nanda2023progress, merrill2023talecircuitsgrokkingcompetition, humayun2024deepnetworksgrok, demoss2024complexitydynamicsgrokking, prieto2025grokkingedgenumericalstability}. 
Instead, LLMs' pre-training often takes only one epoch on a web-scale corpus of heterogeneous, cross-domain data that usually contains noise, imbalance, and redundancy~\citep{brown2020languagemodelsfewshotlearners, zhao2025surveylargelanguagemodels}. Hence, LLMs' learning dynamics can vary drastically across different data, leaving whether grokking still emerges in LLM pretraining an open problem. It is also unclear whether and when different data are memorized with converged loss, and, whether they are memorized at the same time. 
Moreover, without the repeated replay of the same data and a decrease in loss, the conversion from memorizing training tokens to generalizing on downstream tasks remains mysterious: When does the model start to generalize to different tasks? Is memorization a prerequisite? How does generalization performance change after memorization? 

Unfortunately, these questions cannot be answered by the dynamics of pretraining loss since the generalization performance can still vary after the loss plateaued, as observed in previous studies of grokking~\citep{power2022grokking, nanda2023progress}. For LLMs, it is also practically expensive to monitor its generalization performance during pretraining, since the model has to be finetuned to gain instruction-following capability before being evaluated on downstream tasks.
Hence, developing an efficient-to-compute metric to accurately track the generalization can offer critical insights on better understanding and improving the pretraining of LLMs. 

\textbf{Main Contributions~~}
In this paper, we investigate the training dynamics and grokking in LLM pretraining for a mechanistic interpretation of its emergent generalization on downstream tasks. Our study is conducted on the open-sourced pretraining checkpoints of a 7B mixture-of-experts (MoE) LLM, OLMoE~\citep{muennighoff2024olmoe}. While our study mainly focuses on OLMoE, its pretraining and evaluation setups are common among other LLMs. 
For the first time, we verify the existence of local grokking during LLM pretraining. However, unlike the previously observed global and synchronous grokking for most data~\citep{power2022grokking, merrill2023talecircuitsgrokkingcompetition, nanda2023progress}, \textit{LLM memorizes domains/groups of training data at different training steps and takes varying amounts of steps to generalize to related benchmark tasks.} 
The starting time and lasting steps of such local grokking vary across data groups. 
As a result, generalization performance is usually unstable during the earlier pretraining stages but starts to steadily improve once sufficient data have been memorized. This local difference also reflects data difficulty: the later memorized data group often takes longer to generalize. \looseness-1

Another primary challenge is to explain how the continual pretraining after memorization completed leads to the delayed sharp rise of the generalization performance. To investigate the mechanism of memorization-to-generalization transition, we explore internal states of LLMs track their major changes after loss converges. Specifically, we study how the pathways (the expert choices across layers for training samples) in OLMoE change. We introduce two metrics to measure their complexity how it changes over time: 
(1) the edit distance between different samples' pathways; and (2) the consistency of selected experts' embedding between consecutive layers for each sample. 

Our empirical analysis on four domains reveals a prominent trend of the pathway complexity during grokking: the edit distance declines and the consistency increases, despite the converged training loss and more data being learned. This change in pathway complexity reflects a transition to smarter memorization: it keeps discovering more transferable knowledge across samples to encode data more efficiently. 
\textit{It explains why generalization improves with plateaued training loss: the model keeps finding more generalizable and shareable structures to memorize training data.} Additionally, we provide a theoretical explanation relating the pathway complexity to a generalization bound of a one-layer MoE. Practically, the two metrics show strong correlations with the evaluation results on standard benchmarks after instruction finetuning. Therefore, they provide a zero-cost, finetuning and benchmark evaluation-free tool to monitor generalization during pretraining, which is critical to LLM developers and practitioners. 
These discoveries take the first step toward bridging the gaps of understanding grokking and memorization-to-generalization transition in LLM pretraining. 
Our key findings and contributions can be summarized as: \looseness-1
\begin{itemize}[leftmargin=*]

    \item Grokking still occurs during the 
    near single-pass
    pretraining of practical-scale LLMs, but it is local and asynchronous for different data groups, unlike global grokking for all data in previous works. \looseness-1  
    \item Grokking's memorization-to-generalization transition can be explained by the dynamics of pathways in MoE: pathway similarity between samples and consistency across layers increase. A theoretical connection is also built between the pathway complexity and a generalization bound.  
    \item The two metrics we developed to measure pathway complexity provide a finetuning and evaluation-free tool with almost zero cost to monitor the generalization during LLM pretraining.  

\end{itemize}

\section{Related Work}

Previous studies on grokking have primarily focused on relatively small models and simplistic tasks. Grokking was first identified by \citet{power2022grokking}, who observed the phenomenon using a two-layer transformer trained on a small-scale algorithmic dataset. Subsequent investigations have extended these findings to shallow transformers and densely connected networks~\citep{nanda2023progress,notsawo2023predicting,merrill2023talecircuitsgrokkingcompetition, liu2022omnigrok,fan2024deepgrokkingdeepneural}, and CNNs~\citep{humayun2024deepnetworksgrok}. Notably, all aforementioned studies employ multi-epoch training paradigms, which differ substantially from the one-pass training approach typical in LLM pretraining. Although \citet{lv2025languagemodelsgrokcopy} examined grokking behavior using a 162M parameter transformer pretrained on 40 billion tokens for a copying task, the scale of their model, dataset, and the simplicity of the task remain distant from practical, real-world scenarios. In contrast, our study is the first to investigate grokking in a 7-billion parameter LLM across diverse and practical tasks, including mathematical reasoning, code generation, commonsense understanding, and domain-specific knowledge retrieval.

Several works have focused on revealing the mechanisms behind grokking behavior. \citet{liu2022towards} demonstrated in a simplified setting that grokking results from structured representations emerging during training. \citet{nanda2023progress} further revealed that grokking can be attributed to the gradual amplification of structured mechanisms encoded within model weights.
\citet{wang2024grokked} showed that even small Transformers trained from scratch on synthetic reasoning tasks exhibit a grokking-driven emergence of implicit reasoning, where generalization arises only after extended overfitting and internal structure formation.
Using a single-layer ReLU network, \citet{merrill2023talecircuitsgrokkingcompetition} associated grokking with subnetwork sparsity. Meanwhile, \citet{humayun2024deepnetworksgrok} observed that decision boundaries become smoother during the grokking phase in CNN classification tasks. Beyond interpretability-focused research, \citet{notsawo2023predicting} identified oscillatory patterns in early training loss as predictors of subsequent grokking phenomena. Distinct from these works, our research provides an explanation of grokking within significantly larger LLMs through an interpretative analysis of MoE architectures. Additionally, we introduce practical indicators for monitoring generative behavior throughout LLM pretraining.

\section{Grokking (Delayed Generalization) in LLM Pretraining}
\label{sec:analysis}

While most existing works study grokking of small models on specified algorithmic tasks, investigating similar phenomena in LLM pretraining poses several new unique challenges, which inherently motivate several designs in our experimental settings.
\begin{itemize}[leftmargin=*]
    \item \textbf{Training-test evaluation gap}: most LLMs are trained for the next token(s) prediction while evaluated on diverse benchmark tasks, making the conventional comparison between training and test loss/accuracy invalid. Hence, we need to adapt the definition of grokking to the LLM setting. \looseness-1
    \item \textbf{Heterogeneous, web-sourced, cross-domain training data}: While previous grokking studies focus on clean training data with the same distribution/format, most LLMs are pretrained on heterogeneous, cross-domain, web-sourced data that contain redundant, contaminated, or imbalanced samples. Hence, their loss may converge at different rates and pose distinct effects on downstream tasks. To address these challenges, we carefully filter and group the data in our analysis. \looseness-1 
    \item \textbf{Generalization to diverse downstream tasks}: Previous studies of grokking focus on generalization to highly-specified and limited tasks, while practical LLMs are expected to generalize to arbitrary tasks or domains via prompting. Hence, LLMs' generalization behaviors may vary drastically across different downstream tasks/benchmarks and difficulty levels during pretraining. 
    \item \textbf{Near single-pass pretraining}: Most LLMs' pretraining only takes approximately one epoch, so the model has been trained only once on each sample, and its training loss may even converge before visiting later data. However, such ``memorization'' of unseen data is not caused by overfitting or repeated replay. Instead, it reflects the interpolation or composition of learned data, which might be counted as ``generalization'' in previous works but differs from the emergent generalization of LLMs on new tasks. \looseness-1
\end{itemize}

\subsection{Study Training Dynamics of LLM Pretraining}
\label{subsec:definition}
\textbf{Memorization is measured by the pretraining objective.}
Next token prediction (NTP) is the most widely adopted pretraining objective for LLMs. Let the pretraining corpus be $D_{\text{train}}$, it aims to minimize the negative log-likelihood (NLL) of each token $x_{i,j}$ given previous tokens $x_{i,<j}$ in a text sequence $x_i=(x_{i,1},\ldots,x_{i,|j|})$ drawn from $D_{\text{train}}$:
\begin{equation}
    \label{eq:training_objective}
    \ell_i(\theta)= -\frac{1}{|x_i|} \sum_{j=1}^{|x_i|} \log p_{\theta}\!\left(x_{i,j} \,\middle|\, x_{i,<j}\right).
\end{equation}
Since $\ell_i(\theta)$ directly measures how the model memorizes each token in $x_i$, we will track it in the pretraining process and identify the sample with consistently small and converged loss across checkpoints as memorized. 
Specifically, for a sample $x_i$ with loss $\ell_i(\theta_t)$ at pretraining step $t$, we identify it as \emph{memorized} since step $t_i^*$ if $\ell_i(\theta_t) \leq \varepsilon$ and $|\ell_i(\theta_t) - \ell_i(\theta_T)| \leq \delta$ for all $t \geq t_i^*$, where $\varepsilon$ is a small threshold, $\delta$ enforces stability over time, and $T$ indexes the final pretraining checkpoint. 
In Appendix~\ref{app:memorization_ablation}, We  verify that varying $\varepsilon$ and $\delta$ does not change the post-memorization behavior analyzed in Section~\ref{sec:trend_analysis}.

\textbf{Generalization is measured on standard benchmarks after instruction tuning. }
As foundation models with various emergent capabilities, LLMs are expected to generalize to an open set of unseen tasks through simple prompting/instructions. Hence, their generalization performance is usually measured across standard benchmarks defined on diverse downstream tasks. To equip a pretrained LLM $\theta$ with the instruction-following capability, an instruction tuning algorithm $\mathcal A_{\text{IT}}$ further finetunes $\theta$ on a dataset $D_{\text{IT}}$. The generalization performance on benchmark-$i$ is then measured by 
\begin{equation}
    \label{eq:generalization_ability}
    \mathcal G_i=\text{Eval}_{i}(\theta_{IT}, D_i), \theta_{IT}=\mathcal A_{\text{IT}}(\theta, D_{\text{IT}}),
\end{equation}
where $D_i$ is the dataset in benchmark-$i$ and $\text{Eval}_{i}$ defines the corresponding evaluation metric, which varies across benchmarks for different types of tasks. 
For example, accuracy is usually used in multi-choice QA tasks while BLEU/ROUGE scores are used in summarization and generation tasks. For code generation tasks, the generalization performance is usually measured by unit tests, where a test case is considered correct if the generated code can pass the unit tests.

\textbf{Grokking refers to delayed generalization improvement after memorization is completed. } Following the same spirit of previous works on grokking, we mainly focus on the phenomenon that $\mathcal G_i$ starts to surge long after $\ell_i(\theta)$ has converged/plateaued. We also use the training loss to track memorization as previous works. However, the test loss adopted by previous grokking analysis is no longer an ideal metric for LLM generalization, since it focuses on token-level matching to ground truths while downstream tasks focus on semantic-level matching, and some of them do not provide ground truths (e.g., code generation).

\textbf{Experimental Setup} 
We equidistantly sample 10 checkpoints from the OLMoE pretraining trajectory.
To evaluate memorization, we probe OLMoE on samples drawn directly from its pretraining corpus $D_{\text{train}}$, ensuring fidelity to the model’s original distribution. 
We evaluate generalization on widely used benchmarks spanning four domains: \textit{math}, \textit{code}, \textit{commonsense QA}, and \textit{domain-specific QA} (10k pretraining and 10k test samples per domain; see Appendix~\ref{app:data}). 
To exclude contaminated benchmark samples from the generalization analysis, we apply the Min-K\%++ membership inference method~\citep{zhang2024min} to identify them. Moreover, we mainly focus on the samples whose model predictions become consistently correct before the pretraining ends, and analyze when this generalization emerges and how long it persists. 
For downstream tasks, we apply lightweight LoRA-based instruction tuning $\mathcal A_{\text{IT}}$ on $D_{\text{IT}}$ to equip checkpoints with basic instruction-following capability while preserving pretrained capabilities (Appendix~\ref{app:exp_detail}).

\subsection{Asynchronous Memorization, Delayed Generalization, \& Local Grokking}
\label{sec:local_grokking}

\begin{figure}[ht]

    \begin{center}
\includegraphics[width=1.0\textwidth]{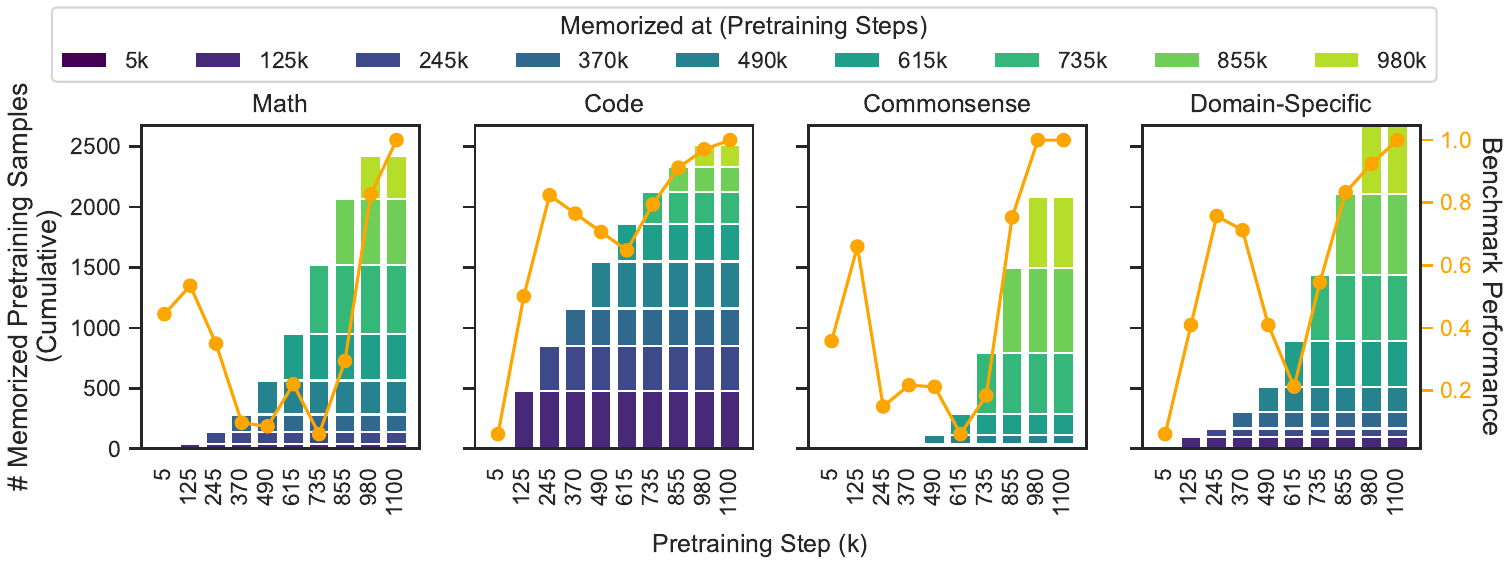}
  \end{center}
  
  \caption{
  \textbf{Memorized training samples at different pretraining steps and the generalization performance} on standard benchmarks for each domain. It shows asynchronous memorization of different data in each domain and a delayed generalization leap after memorizing a certain amount of data. \looseness-1
  }
  
  \label{fig:grokking_count}
\end{figure}

To understand when memorization unfolds during pretraining and how it affects generalization, Figure~\ref{fig:grokking_count} contrasts the number of memorized training samples (using the criterion defined above) at each pretraining checkpoint with benchmark performance for each domain. It shows that (1) training samples are memorized at different pretraining steps; (2) generalization typically follows memorization with a lag: benchmark performance starts to achieve stable gains after sufficient data has been memorized. 
An interesting discovery is that the lag length varies across domains: generalization leap on math and coding tasks requires memorizing many more samples than commonsense and domain-specific QA tasks. 

These observations reveal the complex training dynamics of LLMs caused by training data heterogeneity and their uneven attributions to different benchmark tasks. It implies that the widely studied global grokking---a synchronous memorization of most training data and a delayed contemporary generalization leap on most test samples---cannot be found in LLM pretraining. That being said, the memorization of certain data is plausible to impose a long-term, delayed effect on the generalization to specific downstream tasks. To investigate such local dynamic patterns, we group data in each domain by their memorization steps $t_i^*$ and particularly focus on the early-memorized groups, as they leave more checkpoints after $t_i^*$ to study the subsequent dynamics of generalization.  
Accordingly, we group samples in benchmarks by the training steps since when their predictions become consistently correct.
To enable an analysis of how memorization drives downstream generalization performance, we pair each training data group with its most relevant benchmark samples via bipartite graph matching (by Hungarian algorithm~\citep{kuhn1955hungarian}), with the training-test data similarity defined in the embedding space of SentenceTransformer~\citep{reimers-2019-sentence-bert}.
Additional implementation details of the pairing procedure are provided in Appendix~\ref{app:pairing_data}.

\begin{figure}[ht]

    \begin{center}
    \includegraphics[width=1.0\textwidth]{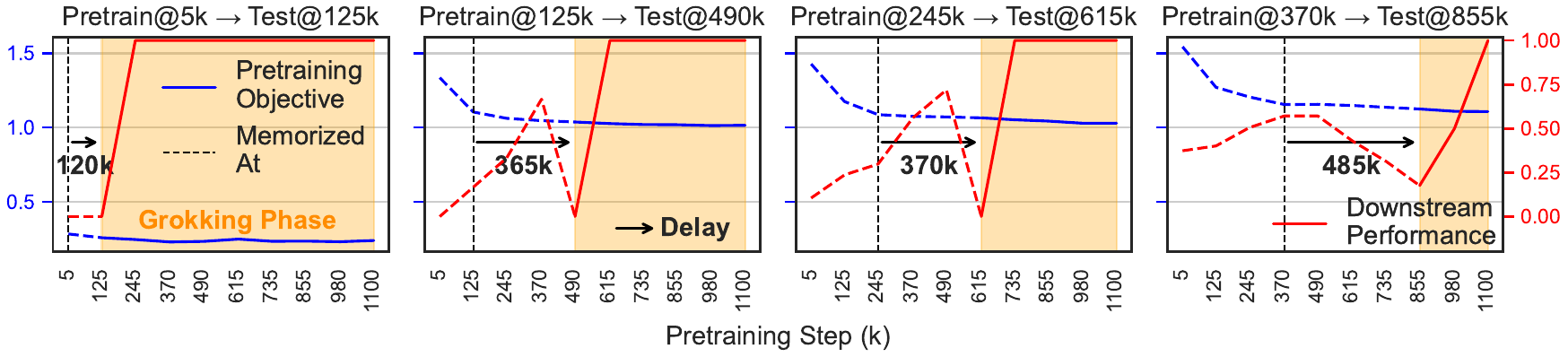}
  \end{center}
  
  \caption{\textbf{Pretraining objective (blue) and benchmark performance (red) across four paired training-test data groups from the math domain.} 
  Pretrain@$p$ → Test@$q$ denotes a paired training–test group, 
where Pretrain@$p$ is the training subset whose samples reach stable memorization at step $p$, 
and Test@$q$ is the benchmark subset whose accuracy stabilizes at step $q$. 
These pairs are formed using embedding-based Hungarian matching, as detailed in Appendix~\ref{app:pairing_data}.
  We observe local grokking on each pair: a delayed generalization leap on benchmark tasks after the pretraining objective plateaued. From left to right, more difficult data memorized (converged) later associate with a longer delay towards grokking.\looseness-1
  }

  \label{fig:grokking_curve}
\end{figure}
Figure~\ref{fig:grokking_curve} visualizes the memorization and generalization on four groups: all training-test pairs exhibit a delayed generalization leap on benchmark tasks after the pretraining objective stably converged on the training data. Hence, the training objective cannot monitor or predict the generalization performance.    
Moreover, earlier-converging groups generalize sooner after loss converged, while later-converged groups associate with substantially longer delays. 
Notably, this delay scales with data difficulty, 
which affects not only the number of steps taken to memorize the data but also the duration of memorization to generalization transition.

\textbf{Remaining Question}
This local grokking indicates that the model's internal states, except its pretraining loss, might track generalization or reflect the transition better. 
This raises a fundamental question for understanding the emergence of capability in LLM:\looseness-1
\vspace{0.2em} \textit{What internal state changes within an LLM signify the emergence of generalization?}
Investigating this problem can provide critical insights to uncover the mechanisms of memorization-to-generalization transition and monitor the pretraining progress in practice. To this end, we probe LLMs' internal state dynamics to address the challenge.

\section{Monitor Memorization-to-Generalization Transition by Training Dynamics of Routing Pathways}
\label{sec:trend_analysis}

As shown in Section~\ref{sec:analysis}, generalization in LLMs emerges well after memorization, suggesting that internal reorganization underlies this transition. 
Such dynamics are difficult to track in dense architectures, where all neurons are simultaneously involved. 
MoEs simplify this process: by organizing computation into experts, each input activates only a subset of experts, forming a discrete \textit{routing pathway} across layers (Figure~\ref{fig:method}(a)). 
These pathways reveal how computation is allocated and reorganized during pretraining, providing a mechanistic view of the memorization-to-generalization transition.\looseness-1

In this section, we analyze the evolution of \textit{routing pathway} during pretraining and introduce two metrics to quantify these changes (Figure~\ref{fig:method}): (i) the similarity of pathways across different inputs (Section~\ref{sec:consistency}), and (ii) the consistency of a single input's pathway (Section~\ref{sec:complexity}).
Importantly, all metrics are computed on pretraining samples that have already been memorized, i.e., those whose training objective has stabilized at low values beyond specific checkpoints, and trace how their internal routing continue to evolve from memorization to generalization.
We then assess these metrics as robust generalization monitors (Section~\ref{sec:monitor}) and conclude by establishing a spectral framework connecting routing geometry to generalization bounds (Section~\ref{sec:theory}).

\begin{figure}[h]

    \begin{center}
    \includegraphics[width=\textwidth]{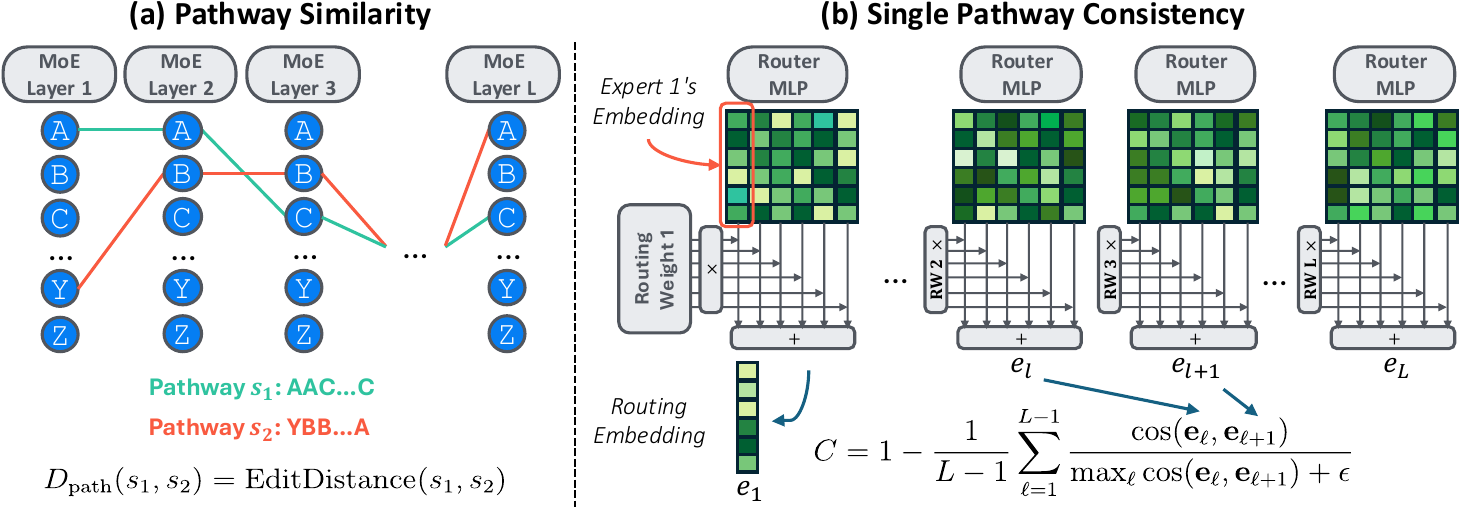}
  \end{center}
  
  \caption{\textbf{Pathway Complexity Metrics to Monitor Grokking.} (a) Pathway similarity between samples is measured by edit distance on their sequences of expert choices across layers. (b) Pathway consistency quantifies the smoothness of expert transitions between subsequent layers for the same sample by cosine similarity of their weighted expert embeddings.\looseness-1
  }
  
  \label{fig:method}
\end{figure}

\subsection{Pathway Distance and Knowledge Sharing between Training Samples}
\label{sec:consistency}

Building on prior findings that grokking coincides with the emergence of \textit{structured internal mechanisms}~\citep{liu2022towards,nanda2023progress,wang2024grokked}, we hypothesize that as the model begins to generalize, it transitions from highly individualized, input-specific computation toward more \textit{shared, structured computation across related inputs}.  
This shift should manifest as increasing \textbf{pathway similarity}---the convergence of routing patterns across similar samples---indicating that the model is developing common mechanisms for solving related tasks.  

We test this by examining how the similarity of expert pathway evolves during pretraining.
Each input's pathway is the ordered sequence of selected experts through the $L$ MoE layers. 
Specifically, for input $x_i$, we define its pathway $s_i$ as:
$s_i = \mathrm{concat}(e_1^{(i)},\, e_2^{(i)},\, \dots,\, e_L^{(i)})$,
where $e_\ell^{(i)}$ denotes the ordered list of expert indices at layer $\ell$, obtained by ranking experts according to the mean routing weights across all tokens for input $x_i$.
To determine which experts are included, we first rank experts based on their routing weight and then select top-ranked experts until their cumulative routing weight exceeds a predefined threshold.
The weight threshold selects the experts that actually drive each input’s computation, rather than picking a fixed number of experts.
In our main analysis, we use a cumulative routing-weight threshold of 0.7, and Appendix~\ref{app:ablation_threshold} shows that the observed pathway trends remain consistent when varying this threshold.
The selected experts are then concatenated as a comma-separated string (e.g., \texttt{`3,1,5'}) which are later joined across layers with hyphens (e.g., \texttt{`3,1,5-\dots-9,1'}). \looseness-1

We use the Levenshtein edit distance $D_{\mathrm{path}}(s_i, s_j)=\mathrm{EditDistance}\big(s_i, s_j)$,
which counts the minimum insertions, deletions, or substitutions required to align two pathways, to measure pathway similarity.
This metric captures both local deviations in expert choice and global shifts in pathway structure, reflecting differences in selection, length, and ordering.
We monitor the average pairwise distance across samples throughout pretraining to track the emergence of shared routing.\looseness-1

\textbf{Overall pattern.} 
Our analysis, shown in Figure~\ref{fig:edit-distance}, reveals a clear trajectory in pathway dynamics. 
Early in pretraining, most inputs share nearly identical routes, producing low edit distance $D_{\mathrm{path}}$.
As the model memorizes individual inputs, pathways diverge and $D_{\mathrm{path}}$ rises.
Crucially, $D_{\mathrm{path}}$ decreases after memorization, which indicates that semantically related inputs then converge into similar routing sequences, signaling the \textbf{emergence of shared pathways} that support generalization.
\looseness-1

\textbf{Layerwise pattern.}
While prior work~\citep{lo2025closer} reports static depth-dependent specialization in trained MoEs, our results show that the memorization-to-generalization transition is also dynamically depth-dependent during pretraining (Figure~\ref{fig:edit_distance_layerwise}).
Shallow layers show the steepest decline in $D_{\mathrm{path}}$ after memorization, indicating rapid convergence to shared pathways.
Deeper layers exhibit smaller reductions, and the final layer even shows an increase in $D_{\mathrm{path}}$ post-memorization.
These patterns suggest a depth-specific reorganization: early layers consolidate universal representations, while later layers retain or enhance routing flexibility, balancing shared computation with task-specific specialization.\looseness-1

\begin{figure}[h]

    \begin{center}
    \includegraphics[width=1.0\textwidth]{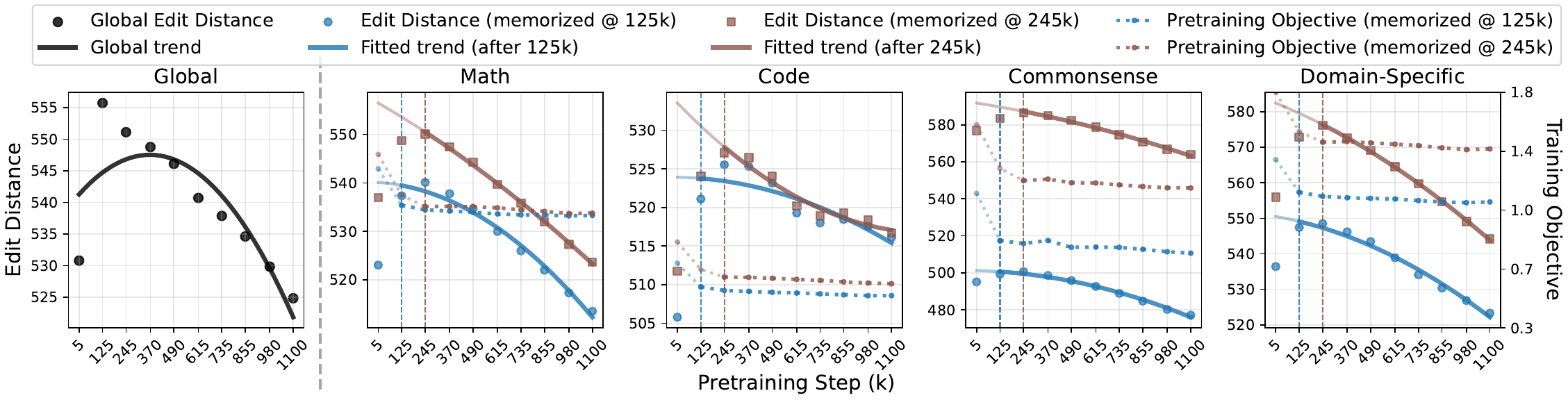}
  \end{center}
  
  \caption{\textbf{Pathway edit distance and pretraining objective of two data groups} (memorized at $125k$ and $245k$ steps) during pretraining across four domains.
  The global panel (leftmost) fits the overall quadratic trend over all domains, while domain panels fit separate trends after the corresponding convergence step.
  Despite early training objective plateauing, pathway edit distance continues to decline, indicating a declining complexity of internal memorization.\looseness-1
   }

  \label{fig:edit-distance}
\end{figure}

\begin{figure}[h]

    \begin{center}
    \includegraphics[width=1.0\textwidth]{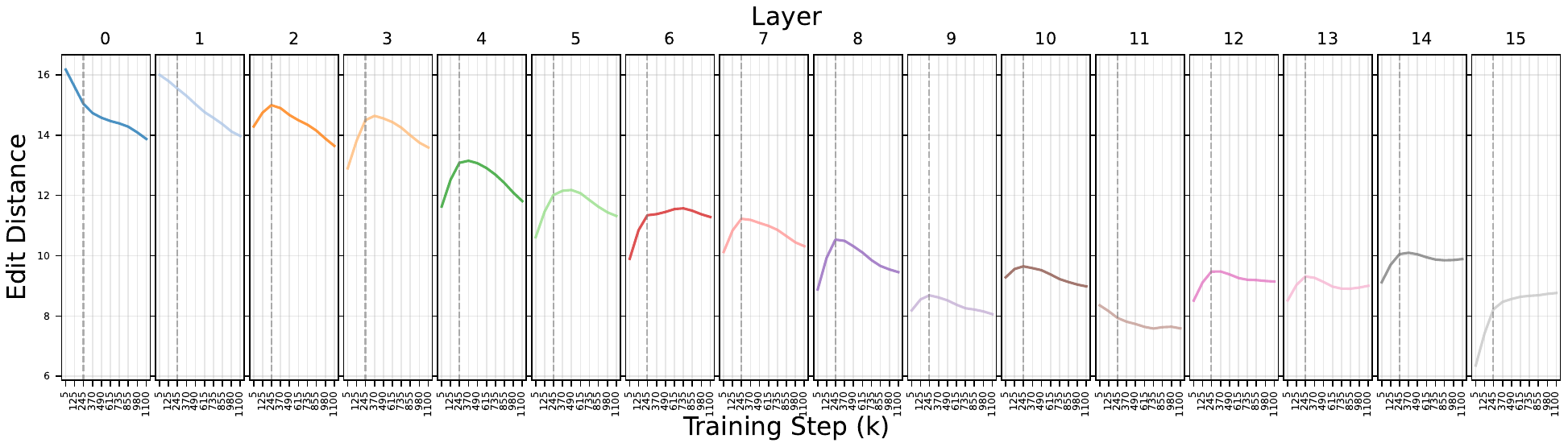}
  \end{center}
  
  \caption{\textbf{Layer-wise evolution of pathway edit distance during pretraining} for the data group memorized at $245k$ step. Overall distance decreases from early to later layers, indicating higher complexity in early layers.
  After pretraining objective convergence, early-layer routing among similar samples simplifies quickly, later layers adapt more gradually, and the final layer diversifies after memorization. \looseness-1
  }
  
  \label{fig:edit_distance_layerwise}
\end{figure}

\subsection{Pathway Consistency and Complexity for Single Samples}
\label{sec:complexity}

We next analyze how the routing complexity of individual inputs evolves during pretraining.  
Our hypothesis is that increasingly streamlined and consistent expert selection across layers reflects the reuse of shared pathways, signaling the emergence of generalizable processing strategies.  
To capture this, we define \textbf{single-sample pathway consistency}, which measures the smoothness of expert transitions across consecutive layers.
\looseness-1

In a typical MoE, the router at layer $\ell$ is a single-layer MLP—a linear transformation followed by a softmax—producing scores over $K$ experts, with weight matrix $\mathbf{W}_\ell \in \mathbb{R}^{K \times d}$. We define the \emph{expert embedding} for expert $k$ as $\mathbf{v}_\ell^k := \mathbf{W}_\ell[k], k = 1, \dots, K,$
which serves as an anchor vector to compute routing scores. For input $x_i$, its \emph{routing embedding} is the weighted sum:
{\setlength{\abovedisplayskip}{0pt}%
 \setlength{\belowdisplayskip}{0pt}%
\[
\textstyle
\mathbf{e}_\ell^{(i)} = \sum_{k=1}^{K} g_{\ell}^{(i,k)} \cdot \mathbf{v}_\ell^k,
\]
}
where $g_\ell^{(i,k)}$ is the routing weight assigned to expert $k$. Although experts are independently parameterized, the strong correlations between adjacent layers in deep networks~\citep{he2016deep} and MoEs~\citep{li2025sparser} suggest that $\{\mathbf{v}_\ell^k\}$ can be comparable across neighboring layers, supporting structural analysis of routing behavior.

To quantify routing smoothness, we define \textbf{pathway consistency} $C_i$ as the normalized average cosine similarity between consecutive layer embeddings for input $x_i$:
{\setlength{\abovedisplayskip}{3pt}%
 \setlength{\belowdisplayskip}{3pt}%
\[
C_i = 1- \frac{1}{L -1} \sum_{\ell =1}^{L-1} \frac{\mathrm{cos}(\mathbf{e}_{i,\ell}, \mathbf{e}_{i,\ell+1})}{\max_{\ell} \mathrm{cos}(\mathbf{e}_{i,\ell}, \mathbf{e}_{i,\ell+1}) + \epsilon},
\]}
where $\epsilon = 10^{-8}$ ensures numerical stability. Higher $C_i$ indicates more erratic transitions, while lower values reflect smoother, more consistent routing across layers.

Figure~\ref{fig:consistency} tracks the evolution of pathway consistency and pretraining objective across four domains.
A key finding is that even after the objective stabilizes, pathway consistency continues to improve—revealing ongoing refinement in the internal routing of the model.
This suggests that smoother, more coherent cross-layer transitions emerge well after memorization, positioning pathway consistency as a valuable lens for understanding generalization beyond what the training objective alone captures.\looseness-1

\begin{figure}[h]
 
    \begin{center}
    \includegraphics[width=1.0\textwidth]{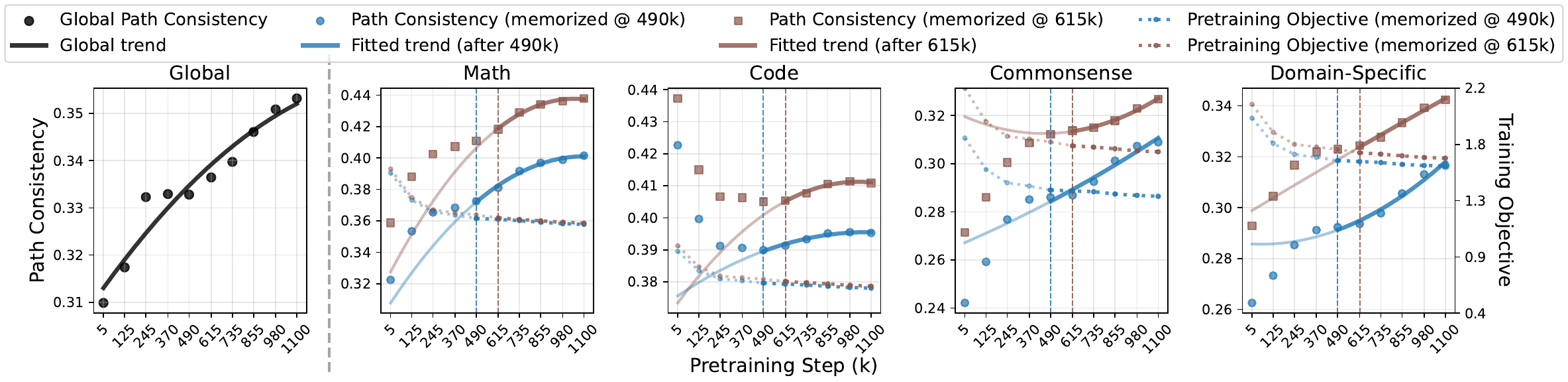}
  \end{center}
  
  \caption{\textbf{Pathway consistency vs. training objective} on four domains during pretraining.
  The global panel (leftmost) shows that the path consistency consistently improves over all data, while each domain panel shows the same trend on two data groups (memorized at $490k$ and $615k$) from each domain after the training objective converges. This reflects a progressively smoother transition of experts between consecutive layers after memorization. 
   }
  \label{fig:consistency}
\end{figure}

\subsection{Pathway Complexity Metrics as Monitors of Generalization}
\label{sec:monitor}

\definecolor{nonsig1}{RGB}{240,240,240}  %
\definecolor{nonsig2}{RGB}{220,220,220}  %
\definecolor{nonsig3}{RGB}{200,200,200}  %

\begin{table}[ht]
\centering
\caption{\textbf{Correlation between pathway complexity metrics and generalization} (test accuracy) across four domains, with the comparison to training/validation loss. 
$p$-value: 
\colorbox{sig1}{\rule{0pt}{6pt}\scriptsize $p<0.1$},
\colorbox{sig2}{\rule{0pt}{6pt}\scriptsize $p<0.05$},
\colorbox{sig3}{\rule{0pt}{6pt}\scriptsize $p<0.01$},
\colorbox{sig4}{\rule{0pt}{6pt}\scriptsize $p<0.001$},
\colorbox{nonsig1}{\rule{0pt}{6pt}\scriptsize $p>0.1$},
\colorbox{nonsig2}{\rule{0pt}{6pt}\scriptsize $p>0.5$},
\colorbox{nonsig3}{\rule{0pt}{6pt}\scriptsize $p>0.8$}.
Preferred correlation: 
\colorbox{poscorr}{\rule{0pt}{6pt}\scriptsize positive}, 
\colorbox{negcorr}{\rule{0pt}{6pt}\scriptsize negative}.\looseness-1}

\resizebox{\textwidth}{!}{%
\begin{tabular}{lcccccccc}
\toprule
\multirow{2}{*}{Metric} & \multicolumn{2}{c}{Math} & \multicolumn{2}{c}{Code} & \multicolumn{2}{c}{Commonsense} & \multicolumn{2}{c}{Domain-Specific} \\
 & Pearson & Spearman & Pearson & Spearman & Pearson & Spearman & Pearson & Spearman \\
\midrule
\negmetric{Pathway Distance (pairwise)}
  & \corrcell{sig1}{-0.9471} & \corrcell{sig1}{-0.9487}
  & \corrcell{sig2}{-0.9290} & \corrcell{sig2}{-0.9276}
  & \corrcell{sig2}{-0.9821} & \corrcell{sig3}{-0.9747}
  & \corrcell{sig1}{-0.9254} & \corrcell{sig1}{-0.9487} \\

\posmetric{Pathway Consistency (sample-wise)}
  & \corrcell{sig1}{0.9246} & \corrcell{sig1}{0.9487}
  & \corrcell{sig4}{0.9786} & \corrcell{sig4}{0.9856}
  & \corrcell{sig3}{0.9804} & \corrcell{sig3}{0.9747}
  & \corrcell{sig3}{0.9917} & \corrcell{sig1}{0.9487} \\
\midrule
\negmetric{Training Loss}
  & \corrcell{nonsig3}{-0.0435} & \corrcell{nonsig2}{-0.2108}
  & \corrcell{nonsig3}{-0.0680} & \corrcell{nonsig3}{-0.2052}
  & \corrcell{nonsig1}{0.6427} & \corrcell{nonsig3}{0.4104}
  & \corrcell{nonsig1}{0.7944} & \corrcell{sig1}{0.9487} \\

\negmetric{Training Loss (moving average)}
  & \corrcell{nonsig2}{0.4714} & \corrcell{nonsig2}{0.3162}
  & \corrcell{nonsig2}{0.2481} & \corrcell{nonsig2}{-0.2052}
  & \corrcell{nonsig1}{0.7931} & \corrcell{nonsig1}{0.6669}
  & \corrcell{nonsig1}{0.8705} & \corrcell{sig1}{0.9487} \\

\negmetric{Validation Loss} &  \corrcell{nonsig2}{0.4801} & \corrcell{nonsig3}{-0.1054}   & \corrcell{nonsig1}{0.5466}   & \corrcell{sig1}{0.8721}  &  \corrcell{nonsig1}{-0.5752} &  \corrcell{nonsig1}{-0.4617}    &  \corrcell{nonsig2}{-0.3339}   &  \corrcell{nonsig2}{-0.3162}  \\
\bottomrule
\end{tabular}}
\label{tab:correlation}
\end{table}

\begin{figure}[ht]
    \begin{center}
    \includegraphics[width=1.0\textwidth]{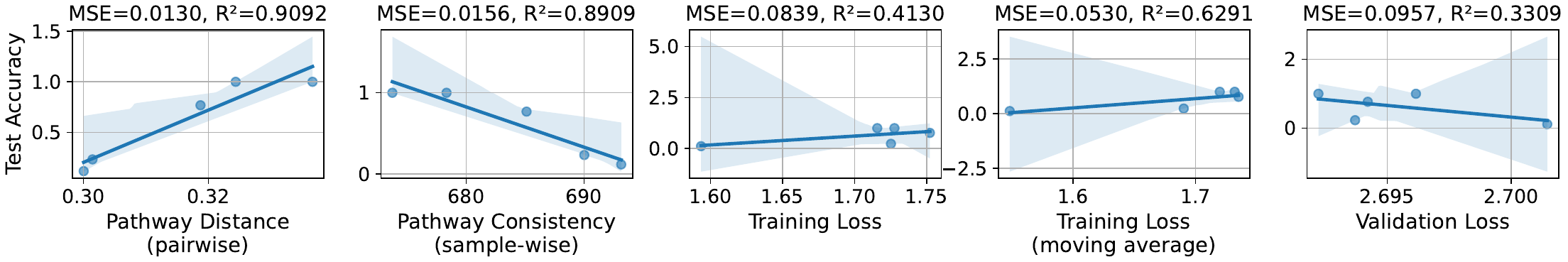}
  \end{center}
  
  \caption{Correlation of the two pathway complexity metrics with benchmark performance. Compared to the training objective and its moving average, both the pathway edit distance and consistency are highly correlated with the test accuracy, though they are also computed on training data.  
  \looseness-1
   }
   
   \label{fig:correlation}
\end{figure}

Given the strong link between routing dynamics and generalization, we ask: \textit{To what extent do our pathway metrics predict test performance across domains?} To answer this, we compute Pearson and Spearman correlations between downstream performance and two metrics across pretraining checkpoints.
As described in Section~\ref{subsec:definition}, we apply a lightweight LoRA-based instruction tuning (rank = 32; see Appendix~\ref{app:exp_detail}) solely to equip pretrained checkpoints with basic instruction-following ability for benchmark evaluation. The robustness of this setup is further supported by the LoRA rank ablation in Appendix~\ref{app:lora_rank_ablation}.
Our correlation analysis focuses on checkpoints after domain-level accuracy begins its sustained rise (shown in Figure~\ref{fig:grokking_count}), thereby isolating the mechanisms that drive generalization beyond the random-accuracy phase.

Table~\ref{tab:correlation} summarizes the correlation results across four tasks.
Pathway consistency (sample-wise) shows exceptionally strong positive correlations with test accuracy—often exceeding $0.97$ and highly significant—indicating that coherent, stable routing across layers is a key marker of generalization. In contrast, pairwise pathway similarity consistently yields strong negative correlations (around -$0.93$), suggesting that as generalization improves, inputs tend to follow increasingly similar expert routes.
By comparison, as shown in Figure~\ref{fig:correlation}, training and validation loss metrics exhibit weaker and less consistent correlations. Notably, while moving-average training loss shows relatively high correlation in certain domains (e.g., Code, Domain-Specific), its direction is inconsistent with expectation: since loss is minimized during training, we would expect a \textit{negative} correlation with test accuracy—not the \textit{positive} one observed here. This mismatch further underscores that conventional pretraining metrics fail to reflect the internal transition from memorization to generalization.

These findings position pathway metrics as robust, domain-general indicators of generalization. Importantly, they rely only on training dynamics—without requiring held-out validation data—making them especially valuable for real-time monitoring in large-scale or resource-constrained settings.

\subsection{Theoretical Connections \& Explanations}
\label{sec:theory}

We connect expert routing patterns with model generalization via the \emph{complexity} of routing, formalized through the \emph{effective dimension} of the routing kernel. This effective dimension will serve as the key determinant of MoE generalization.

Concretely, $\bm{H}^*$ is the Gram matrix of the routing kernel with entries $\bm{H}^*_{ij} = \Theta_{\text{route}}(x_i,x_j)$, capturing similarity between inputs induced by both routing overlap and expert functions. 
The effective dimension is defined as $\text{Tr}(\bm{H}^*(\bm{H}^*+\lambda I)^{-1})$, measuring the complexity of the function space induced by routing.

\begin{theorem}[Generalization Bound]
\label{thm:main}
Under NTK regime with $K$ experts, with probability $1-\delta$ over $n$ samples, where $C_1, C_2$ are NTK-dependent constants and $\lambda$ is the regularization parameter:
\begin{align*}\textstyle
\mathcal{E} &\leq \underbrace{\frac{C_1\lambda^2}{n}\bm{y}^\top(\bm{H}^*+\lambda I)^{-2}\bm{y}}_{\text{Bias}} + \underbrace{C_2\left(\frac{\sigma^2\text{Tr}(\bm{H}^*(\bm{H}^*+\lambda I)^{-1})}{n} + \frac{\sigma^2\log(1/\delta)}{n}\right)}_{\text{Variance + Noise}}
\end{align*}
\end{theorem}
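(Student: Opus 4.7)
The plan is to reduce the MoE generalization problem to a kernel ridge regression (KRR) analysis on the routing kernel, and then apply a standard bias--variance decomposition with concentration for the noise term. First, I would linearize the MoE at initialization in the NTK regime: when the width and the number of experts $K$ are large and parameters stay close to their initialization during training, the predictor $f_\theta(x)$ behaves as an affine function of $\theta$, and its behavior on the training set is governed by the tangent kernel $\Theta(x,x') = \langle \nabla_\theta f_\theta(x), \nabla_\theta f_\theta(x')\rangle$. For an MoE, $\Theta$ splits additively into contributions from (i) the expert parameters with the gate values held fixed, and (ii) the router parameters with the experts held fixed; isolating the latter gives the routing kernel $\Theta_{\text{route}}$ whose Gram matrix is $\bm{H}^*$. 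Under the assumption that expert parameters have already stabilized (consistent with the post-convergence regime our empirical analysis targets), the residual dynamics are driven by $\bm{H}^*$, so ridge-regularized training with penalty $\lambda$ yields the KRR predictor $\hat f(x) = k(x)^\top(\bm{H}^*+\lambda I)^{-1}\bm{y}$.

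Next I would write the expected excess risk $\mathcal{E} = \mathbb{E}_x[(\hat f(x)-f^*(x))^2]$ under the noisy regression model $y_i = f^*(x_i)+\varepsilon_i$ with $\mathbb{E}[\varepsilon_i^2]\le\sigma^2$. Decomposing $\hat f = \hat f_{\text{bias}} + \hat f_{\text{var}}$ in the usual KRR fashion gives:
\begin{align*}
\mathcal{E} &\le \underbrace{\| (\bm{H}^*+\lambda I)^{-1}\lambda \bm{f}^* \|^2_{\bm{H}^*/n}}_{\text{bias, shrinkage of signal}} + \underbrace{\mathbb{E}_\varepsilon\|(\bm{H}^*+\lambda I)^{-1}\bm{H}^*\varepsilon\|^2/n}_{\text{variance}}.
\end{align*}
Bounding the first summand using $\bm{H}^*\preceq \bm{H}^*+\lambda I$ yields the bias term $\tfrac{C_1\lambda^2}{n}\bm{y}^\top(\bm{H}^*+\lambda I)^{-2}\bm{y}$, where we substitute the observed labels $\bm{y}$ for $\bm{f}^*$ at the cost of absorbing the noise into the variance/concentration term. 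Taking expectation over the noise in the second summand and using $\mathbb{E}[\varepsilon\varepsilon^\top]\preceq \sigma^2 I$ gives $\tfrac{\sigma^2}{n}\mathrm{Tr}(\bm{H}^*(\bm{H}^*+\lambda I)^{-1}\bm{H}^*(\bm{H}^*+\lambda I)^{-1})$, which is upper-bounded by $\tfrac{\sigma^2}{n}\mathrm{Tr}(\bm{H}^*(\bm{H}^*+\lambda I)^{-1})$, i.e., the effective dimension of the routing kernel. A sub-Gaussian / Hanson--Wright concentration bound on the quadratic form in $\varepsilon$ then adds the additive $\tfrac{\sigma^2\log(1/\delta)}{n}$ deviation term, yielding the high-probability statement with constants $C_1,C_2$ absorbing universal factors and NTK approximation slack.

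The connection back to the empirical story is immediate through the spectrum of $\bm{H}^*$: both terms are controlled by how mass is distributed over the eigenvalues $\{\mu_k\}$ of $\bm{H}^*$, and a routing mechanism that produces more shared, low-complexity pathways (small pairwise edit distance, high cross-layer consistency) concentrates eigenvalue mass on a low-dimensional subspace, shrinking the effective dimension $\mathrm{Tr}(\bm{H}^*(\bm{H}^*+\lambda I)^{-1})$ and improving the bound.

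The main obstacle will be justifying the NTK linearization for a router whose top-$k$ selection is discontinuous; the clean kernel picture assumes differentiability of expert selection in the parameters. I would handle this by working with a temperature-smoothed softmax gate (letting the temperature shrink with $n$) so that $\nabla_\theta f$ is well-defined, and then arguing that in the observed post-grokking regime the active expert set is essentially frozen, so the smoothing error is higher-order. A secondary subtlety is the implicit-regularization interpretation of $\lambda$: rather than being an explicit ridge penalty, $\lambda$ should be identified with the inverse training time (early-stopping regularization), which requires matching the gradient-flow trajectory on the empirical loss to the KRR solution at scale $\lambda\sim 1/t$. These two reductions are where the assumptions do the real work; once they are in place, the inequality itself follows from textbook KRR analysis.
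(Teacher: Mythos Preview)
Your overall strategy---linearize in the NTK regime, reduce to kernel ridge regression on $\bm{H}^*$, then run the standard bias--variance decomposition with a sub-Gaussian concentration step for the $\log(1/\delta)$ term---is exactly the paper's route, and your handling of the bias and variance pieces matches theirs.

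However, you have the trainable and frozen parameters reversed. In the paper's setup (Appendix~\ref{app:spectrum}), the router is \emph{fixed} and only the expert parameters $\phi_k$ are trained; the routing kernel is
\[
\Theta_{\text{route}}(x,x')=\sum_{j=1}^K g_j(x)\,g_j(x')\,K_{f_j}(x,x'),
\]
where $K_{f_j}$ is the NTK of expert $j$ and the $g_j$ are frozen gate values. So $\Theta_{\text{route}}$ is not the router-parameter block of the full tangent kernel (as you take it to be) but the expert-parameter block, modulated multiplicatively by the fixed routing weights. This reversal is precisely why you find yourself worrying about the discontinuity of top-$k$ selection and reaching for a temperature-smoothed gate: in the paper's actual setup that obstacle never arises, because the gates are never differentiated. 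Likewise, the paper treats $\lambda$ as an explicit ridge penalty on $\|\Delta\theta\|^2$ in the linearized problem, not as an early-stopping surrogate, so your second ``reduction'' is also unnecessary. Once you swap the roles of router and experts, the two obstacles you flag disappear, and the remainder of your argument coincides with the paper's proof essentially line for line.
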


This provides a lens on grokking: as routing evolves from random to structured, the effective dimension collapses, triggering improvements in generalization. 
Intuitively, this collapse corresponds to a simplification of routing pathways: the kernel spectrum becomes more concentrated, reducing the hypothesis space and mitigating overfitting.
The full theoretical setup, assumptions, and proof are detailed in Appendix~\ref{app:spectrum}, while Appendix~\ref{app:connection} empirically confirms that effective dimension is tightly aligned with routing edit distance, establishing pathway complexity as a key driver of MoE generalization.
\looseness-1

\section{Conclusion}

This paper provides the first empirical evidence that, in large-scale LLM pretraining, generalization can emerge well after memorization has been achieved.
Through an analysis of routing dynamics in a 7B-parameter MoE model, we propose two novel pathway-based metrics that accurately track generalization without requiring finetuning or external validation sets.
Our findings reveal a transition from memorization to generalization, marked by increased pathway similarity and consistency, and supported by theoretical connections between routing structure and generalization bounds.
These insights offer a foundation for more transparent and reliable monitoring of generalization in large-scale models.
Future work will extend these pathway-based metrics beyond MoE, by constructing analogous “virtual pathways” in dense models, moving toward a unified framework for tracking generalization in foundation models.

\bibliographystyle{plainnat}
\bibliography{references}

@article{power2022grokking,
  title={Grokking: Generalization beyond overfitting on small algorithmic datasets},
  author={Power, Alethea and Burda, Yuri and Edwards, Harri and Babuschkin, Igor and Misra, Vedant},
  journal={arXiv preprint arXiv:2201.02177},
  year={2022}
}

@inproceedings{lo2025closer,
  title={A closer look into mixture-of-experts in large language models},
  author={Lo, Ka Man and Huang, Zeyu and Qiu, Zihan and Wang, Zili and Fu, Jie},
  booktitle={Findings of the Association for Computational Linguistics: NAACL 2025},
  pages={4427--4447},
  year={2025}
}

@article{liu2022towards,
  title={Towards understanding grokking: An effective theory of representation learning},
  author={Liu, Ziming and Kitouni, Ouail and Nolte, Niklas S and Michaud, Eric and Tegmark, Max and Williams, Mike},
  journal={Advances in Neural Information Processing Systems},
  volume={35},
  pages={34651--34663},
  year={2022}
}

@article{nanda2023progress,
  title={Progress measures for grokking via mechanistic interpretability},
  author={Nanda, Neel and Chan, Lawrence and Lieberum, Tom and Smith, Jess and Steinhardt, Jacob},
  journal={arXiv preprint arXiv:2301.05217},
  year={2023}
}

@article{notsawo2023predicting,
  title={Predicting grokking long before it happens: A look into the loss landscape of models which grok},
  author={Notsawo Jr, Pascal and Zhou, Hattie and Pezeshki, Mohammad and Rish, Irina and Dumas, Guillaume and others},
  journal={arXiv preprint arXiv:2306.13253},
  year={2023}
}

@inproceedings{he2016deep,
  title={Deep residual learning for image recognition},
  author={He, Kaiming and Zhang, Xiangyu and Ren, Shaoqing and Sun, Jian},
  booktitle={Proceedings of the IEEE conference on computer vision and pattern recognition},
  pages={770--778},
  year={2016}
}

@inproceedings{li2025sparser,
  title={Sparser Mixture-of-Adapters with Cross-Layer Generalization},
  author={Li, Ziyue and Zhou, Tianyi},
  booktitle={Proceedings of the 2025 Conference of the Nations of the Americas Chapter of the Association for Computational Linguistics: Human Language Technologies (Volume 1: Long Papers)},
  pages={3988--4002},
  year={2025}
}

@article{liu2022omnigrok,
  title={Omnigrok: Grokking beyond algorithmic data},
  author={Liu, Ziming and Michaud, Eric J and Tegmark, Max},
  journal={arXiv preprint arXiv:2210.01117},
  year={2022}
}

@misc{muennighoff2024olmoe,
      title={OLMoE: Open Mixture-of-Experts Language Models}, 
      author={Niklas Muennighoff and Luca Soldaini and Dirk Groeneveld and Kyle Lo and Jacob Morrison and Sewon Min and Weijia Shi and Pete Walsh and Oyvind Tafjord and Nathan Lambert and Yuling Gu and Shane Arora and Akshita Bhagia and Dustin Schwenk and David Wadden and Alexander Wettig and Binyuan Hui and Tim Dettmers and Douwe Kiela and Ali Farhadi and Noah A. Smith and Pang Wei Koh and Amanpreet Singh and Hannaneh Hajishirzi},
      year={2024},
      eprint={2409.02060},
      archivePrefix={arXiv},
      primaryClass={cs.CL},
      url={https://arxiv.org/abs/2409.02060}, 
}

@article{hu2022lora,
  title={Lora: Low-rank adaptation of large language models.},
  author={Hu, Edward J and Shen, Yelong and Wallis, Phillip and Allen-Zhu, Zeyuan and Li, Yuanzhi and Wang, Shean and Wang, Lu and Chen, Weizhu and others},
  journal={ICLR},
  volume={1},
  number={2},
  pages={3},
  year={2022}
}

@inproceedings{reimers-2019-sentence-bert,
  title = "Sentence-BERT: Sentence Embeddings using Siamese BERT-Networks",
  author = "Reimers, Nils and Gurevych, Iryna",
  booktitle = "Proceedings of the 2019 Conference on Empirical Methods in Natural Language Processing",
  month = "11",
  year = "2019",
  publisher = "Association for Computational Linguistics",
  url = "https://arxiv.org/abs/1908.10084",
}

@article{kuhn1955hungarian,
  title={The Hungarian method for the assignment problem},
  author={Kuhn, Harold W},
  journal={Naval research logistics quarterly},
  volume={2},
  number={1-2},
  pages={83--97},
  year={1955},
  publisher={Wiley Online Library}
}

@inproceedings{attention,title	= {Attention is All You Need},author	= {Ashish Vaswani and Noam Shazeer and Niki Parmar and Jakob Uszkoreit and Llion Jones and Aidan N. Gomez and Lukasz Kaiser and Illia Polosukhin},year	= {2017},URL	= {https://arxiv.org/pdf/1706.03762.pdf}}

@misc{brown2020languagemodelsfewshotlearners,
      title={Language Models are Few-Shot Learners}, 
      author={Tom B. Brown and Benjamin Mann and Nick Ryder and Melanie Subbiah and Jared Kaplan and Prafulla Dhariwal and Arvind Neelakantan and Pranav Shyam and Girish Sastry and Amanda Askell and Sandhini Agarwal and Ariel Herbert-Voss and Gretchen Krueger and Tom Henighan and Rewon Child and Aditya Ramesh and Daniel M. Ziegler and Jeffrey Wu and Clemens Winter and Christopher Hesse and Mark Chen and Eric Sigler and Mateusz Litwin and Scott Gray and Benjamin Chess and Jack Clark and Christopher Berner and Sam McCandlish and Alec Radford and Ilya Sutskever and Dario Amodei},
      year={2020},
      eprint={2005.14165},
      archivePrefix={arXiv},
      primaryClass={cs.CL},
      url={https://arxiv.org/abs/2005.14165}, 
}

@misc{touvron2023llamaopenefficientfoundation,
      title={LLaMA: Open and Efficient Foundation Language Models}, 
      author={Hugo Touvron and Thibaut Lavril and Gautier Izacard and Xavier Martinet and Marie-Anne Lachaux and Timothée Lacroix and Baptiste Rozière and Naman Goyal and Eric Hambro and Faisal Azhar and Aurelien Rodriguez and Armand Joulin and Edouard Grave and Guillaume Lample},
      year={2023},
      eprint={2302.13971},
      archivePrefix={arXiv},
      primaryClass={cs.CL},
      url={https://arxiv.org/abs/2302.13971}, 
}

@misc{bai2023qwentechnicalreport,
      title={Qwen Technical Report}, 
      author={Jinze Bai and Shuai Bai and Yunfei Chu and Zeyu Cui and Kai Dang and Xiaodong Deng and Yang Fan and Wenbin Ge and Yu Han and Fei Huang and Binyuan Hui and Luo Ji and Mei Li and Junyang Lin and Runji Lin and Dayiheng Liu and Gao Liu and Chengqiang Lu and Keming Lu and Jianxin Ma and Rui Men and Xingzhang Ren and Xuancheng Ren and Chuanqi Tan and Sinan Tan and Jianhong Tu and Peng Wang and Shijie Wang and Wei Wang and Shengguang Wu and Benfeng Xu and Jin Xu and An Yang and Hao Yang and Jian Yang and Shusheng Yang and Yang Yao and Bowen Yu and Hongyi Yuan and Zheng Yuan and Jianwei Zhang and Xingxuan Zhang and Yichang Zhang and Zhenru Zhang and Chang Zhou and Jingren Zhou and Xiaohuan Zhou and Tianhang Zhu},
      year={2023},
      eprint={2309.16609},
      archivePrefix={arXiv},
      primaryClass={cs.CL},
      url={https://arxiv.org/abs/2309.16609}, 
}

@misc{lv2025languagemodelsgrokcopy,
      title={Language Models "Grok" to Copy}, 
      author={Ang Lv and Ruobing Xie and Xingwu Sun and Zhanhui Kang and Rui Yan},
      year={2025},
      eprint={2409.09281},
      archivePrefix={arXiv},
      primaryClass={cs.CL},
      url={https://arxiv.org/abs/2409.09281}, 
}

@misc{tan2024understandinggrokkingrobustnessviewpoint,
      title={Understanding Grokking Through A Robustness Viewpoint}, 
      author={Zhiquan Tan and Weiran Huang},
      year={2024},
      eprint={2311.06597},
      archivePrefix={arXiv},
      primaryClass={cs.LG},
      url={https://arxiv.org/abs/2311.06597}, 
}

@misc{huang2023reasoninglargelanguagemodels,
      title={Towards Reasoning in Large Language Models: A Survey}, 
      author={Jie Huang and Kevin Chen-Chuan Chang},
      year={2023},
      eprint={2212.10403},
      archivePrefix={arXiv},
      primaryClass={cs.CL},
      url={https://arxiv.org/abs/2212.10403}, 
}

@misc{ahn2024largelanguagemodelsmathematical,
      title={Large Language Models for Mathematical Reasoning: Progresses and Challenges}, 
      author={Janice Ahn and Rishu Verma and Renze Lou and Di Liu and Rui Zhang and Wenpeng Yin},
      year={2024},
      eprint={2402.00157},
      archivePrefix={arXiv},
      primaryClass={cs.CL},
      url={https://arxiv.org/abs/2402.00157}, 
}

@misc{wei2023chainofthoughtpromptingelicitsreasoning,
      title={Chain-of-Thought Prompting Elicits Reasoning in Large Language Models}, 
      author={Jason Wei and Xuezhi Wang and Dale Schuurmans and Maarten Bosma and Brian Ichter and Fei Xia and Ed Chi and Quoc Le and Denny Zhou},
      year={2023},
      eprint={2201.11903},
      archivePrefix={arXiv},
      primaryClass={cs.CL},
      url={https://arxiv.org/abs/2201.11903}, 
}

@misc{huang2024unifiedviewgrokkingdouble,
      title={Unified View of Grokking, Double Descent and Emergent Abilities: A Perspective from Circuits Competition}, 
      author={Yufei Huang and Shengding Hu and Xu Han and Zhiyuan Liu and Maosong Sun},
      year={2024},
      eprint={2402.15175},
      archivePrefix={arXiv},
      primaryClass={cs.LG},
      url={https://arxiv.org/abs/2402.15175}, 
}

@misc{humayun2024deepnetworksgrok,
      title={Deep Networks Always Grok and Here is Why}, 
      author={Ahmed Imtiaz Humayun and Randall Balestriero and Richard Baraniuk},
      year={2024},
      eprint={2402.15555},
      archivePrefix={arXiv},
      primaryClass={cs.LG},
      url={https://arxiv.org/abs/2402.15555}, 
}

@misc{merrill2023talecircuitsgrokkingcompetition,
      title={A Tale of Two Circuits: Grokking as Competition of Sparse and Dense Subnetworks}, 
      author={William Merrill and Nikolaos Tsilivis and Aman Shukla},
      year={2023},
      eprint={2303.11873},
      archivePrefix={arXiv},
      primaryClass={cs.LG},
      url={https://arxiv.org/abs/2303.11873}, 
}

@misc{prieto2025grokkingedgenumericalstability,
      title={Grokking at the Edge of Numerical Stability}, 
      author={Lucas Prieto and Melih Barsbey and Pedro A. M. Mediano and Tolga Birdal},
      year={2025},
      eprint={2501.04697},
      archivePrefix={arXiv},
      primaryClass={cs.LG},
      url={https://arxiv.org/abs/2501.04697}, 
}

@misc{demoss2024complexitydynamicsgrokking,
      title={The Complexity Dynamics of Grokking}, 
      author={Branton DeMoss and Silvia Sapora and Jakob Foerster and Nick Hawes and Ingmar Posner},
      year={2024},
      eprint={2412.09810},
      archivePrefix={arXiv},
      primaryClass={cs.LG},
      url={https://arxiv.org/abs/2412.09810}, 
}

@misc{zhao2025surveylargelanguagemodels,
      title={A Survey of Large Language Models}, 
      author={Wayne Xin Zhao and Kun Zhou and Junyi Li and Tianyi Tang and Xiaolei Wang and Yupeng Hou and Yingqian Min and Beichen Zhang and Junjie Zhang and Zican Dong and Yifan Du and Chen Yang and Yushuo Chen and Zhipeng Chen and Jinhao Jiang and Ruiyang Ren and Yifan Li and Xinyu Tang and Zikang Liu and Peiyu Liu and Jian-Yun Nie and Ji-Rong Wen},
      year={2025},
      eprint={2303.18223},
      archivePrefix={arXiv},
      primaryClass={cs.CL},
      url={https://arxiv.org/abs/2303.18223}, 
}

@misc{fan2024deepgrokkingdeepneural,
      title={Deep Grokking: Would Deep Neural Networks Generalize Better?}, 
      author={Simin Fan and Razvan Pascanu and Martin Jaggi},
      year={2024},
      eprint={2405.19454},
      archivePrefix={arXiv},
      primaryClass={cs.LG},
      url={https://arxiv.org/abs/2405.19454}, 
}

@article{zhang2024min,
  title={Min-k\%++: Improved baseline for detecting pre-training data from large language models},
  author={Zhang, Jingyang and Sun, Jingwei and Yeats, Eric and Ouyang, Yang and Kuo, Martin and Zhang, Jianyi and Yang, Hao Frank and Li, Hai},
  journal={arXiv preprint arXiv:2404.02936},
  year={2024}
}

@article{wang2024grokked,
  title={Grokked transformers are implicit reasoners: A mechanistic journey to the edge of generalization},
  author={Wang, Boshi and Yue, Xiang and Su, Yu and Sun, Huan},
  journal={arXiv preprint arXiv:2405.15071},
  year={2024}
}

@article{yang2025qwen3,
  title={Qwen3 technical report},
  author={Yang, An and Li, Anfeng and Yang, Baosong and Zhang, Beichen and Hui, Binyuan and Zheng, Bo and Yu, Bowen and Gao, Chang and Huang, Chengen and Lv, Chenxu and others},
  journal={arXiv preprint arXiv:2505.09388},
  year={2025}
}

@article{pile,
    title={The {P}ile: An 800GB Dataset of Diverse Text for Language Modeling},
    author={Gao, Leo and Biderman, Stella and Black, Sid and Golding, Laurence and Hoppe, Travis and Foster, Charles and Phang, Jason and He, Horace and Thite, Anish and Nabeshima, Noa and Presser, Shawn and Leahy, Connor},
    journal={arXiv preprint arXiv:2101.00027},
    year={2020}
}

@article{radford2019language,
  title={Language models are unsupervised multitask learners},
  author={Radford, Alec and Wu, Jeffrey and Child, Rewon and Luan, David and Amodei, Dario and Sutskever, Ilya and others},
  journal={OpenAI blog},
  volume={1},
  number={8},
  pages={9},
  year={2019}
}

@article{zellers2019hellaswag,
  title={Hellaswag: Can a machine really finish your sentence?},
  author={Zellers, Rowan and Holtzman, Ari and Bisk, Yonatan and Farhadi, Ali and Choi, Yejin},
  journal={arXiv preprint arXiv:1905.07830},
  year={2019}
}

@article{clark2018think,
  title={Think you have solved question answering? try arc, the ai2 reasoning challenge},
  author={Clark, Peter and Cowhey, Isaac and Etzioni, Oren and Khot, Tushar and Sabharwal, Ashish and Schoenick, Carissa and Tafjord, Oyvind},
  journal={arXiv preprint arXiv:1803.05457},
  year={2018}
}

@article{mihaylov2018can,
  title={Can a suit of armor conduct electricity? a new dataset for open book question answering},
  author={Mihaylov, Todor and Clark, Peter and Khot, Tushar and Sabharwal, Ashish},
  journal={arXiv preprint arXiv:1809.02789},
  year={2018}
}

\clearpage

\appendix

\newpage
\appendix

\section{Dataset Details}
\label{app:data}

We evaluate OLMoE on four domains used during pretraining: \textbf{math}, \textbf{code}, \textbf{commonsense}, and \textbf{domain-specific} (scientific text).  
For each domain, we construct three sets:

\begin{itemize}
    \item \textbf{Pretraining sample set:} 10,000 examples drawn from OLMoE's pretraining data.
    \item \textbf{Test sample set:} 10,000 examples from a benchmark dataset in the same domain, following OLMoE’s categorization where available.
    \item \textbf{Validation set:} a disjoint dataset used solely to compute validation loss, enabling correlation analysis between in-domain learning and test generalization.
\end{itemize}

For the domain-specific case, we filter the scientific pretraining data using the keyword \textit{``Organic Chemistry''}, aligning the training distribution with the ``college\_chemistry'' benchmark in MMLU.

Table~\ref{tab:datasets} summarizes all dataset sources.  
All pretraining sets are directly drawn from OLMoE’s pretraining corpora; test benchmarks are standard evaluation datasets; validation sets are only used for correlation analysis.

\begin{table}[h]
\centering
\caption{Datasets used in our domain-level evaluation. Pretraining samples are from OLMoE corpora, test benchmarks are used for evaluation, and validation sets are disjoint corpora used only for loss--accuracy correlation. All datasets accessed via HuggingFace.}
\resizebox{\textwidth}{!}{%
\begin{tabular}{p{0.15\textwidth} p{0.30\textwidth} p{0.30\textwidth} p{0.30\textwidth}}
\toprule
\textbf{Domain} & \textbf{Pretraining Source} & \textbf{Test Dataset} & \textbf{Validation Dataset} \\
\midrule
Math & \texttt{open-web-math/\allowbreak open-web-math} & \texttt{hkust-nlp/\allowbreak dart-math-pool-math} & \texttt{qwedsacf/\allowbreak competition\_math} \\ \hline
Code & \texttt{bigcode/\allowbreak starcoderdata} & \texttt{evalplus/\allowbreak humanevalplus} & \texttt{newfacade/\allowbreak LeetCodeDataset} \\ \hline
Commonsense & \texttt{emozilla/\allowbreak dolma-v1\_7-books} & \texttt{tau/\allowbreak commonsense\_qa} & \texttt{kowndinya23/\allowbreak flan2021-commonsense} \\ \hline
Domain-Specific & \texttt{blester125/\allowbreak mediawiki-dolma}\allowbreak  (filtered ``Organic Chemistry'') & \texttt{cais/mmlu}\allowbreak (``college\_chemistry'' subset) & \texttt{chemNLP/\allowbreak chemistry-bookshelves\allowbreak -merged} \\
\bottomrule
\end{tabular}}
\label{tab:datasets}
\end{table}

\section{Experiment Details}
\label{app:exp_detail}

\textbf{Compute Resources.}
All experiments are conducted on NVIDIA A100 GPUs with 80GB memory, using public OLMoE checkpoints.

\textbf{Instruction Tuning.}  
The raw OLMoE checkpoints exhibit weak instruction-following ability, making direct evaluation of generalization behavior unreliable—particularly in text-based domains such as commonsense reasoning or code completion. To address this, we apply light instruction tuning using LoRA~\citep{hu2022lora} to make the pretrained model \textit{follow prompts in a human-readable way}. This ensures we can evaluate generalization while preserving the original pretraining behavior.

To avoid overriding pretraining representations, we adopt minimal finetuning using a small number of epochs and low-rank parameter updates. Specifically:

\begin{itemize}[leftmargin=*]
    \item \textbf{Math:} Finetune with LoRA for 3 epochs on the same math SFT dataset \url{meta-math/MetaMathQA} used by OLMoE.
    \item \textbf{Code:} Finetune for 3 epochs on \url{HuggingFaceH4/CodeAlpaca_20K}.
    \item \textbf{Commonsense \& Domain-Specific:} Finetune for 3 epochs on \url{yahma/alpaca-cleaned}.
\end{itemize}

Key configurations of LoRA include:

\begin{itemize}
    \item \textbf{Target modules:} \texttt{q\_proj}, \texttt{k\_proj}, \texttt{v\_proj}
    \item \textbf{LoRA rank:} 32
    \item \textbf{LoRA dropout:} 0.1
    \item \textbf{Learning rate:} 5e-5
    \item \textbf{Batch size:} 4
    \item \textbf{Max sequence length:} 2048
\end{itemize}

\textbf{Applying Min-K\%++ for Training-Test Separation.}
To maintain a rigorous separation between the pretraining and test datasets, we employ the Min-K\%++ membership inference attack~\citep{zhang2024min}, a state-of-the-art, reference-free method designed to detect whether specific data samples are part of a model's training corpus.

Specifically, we apply the Min-K\%++ method to the test dataset to identify samples that the model is most likely to have encountered during training. Determining an optimal threshold for classification is challenging due to variability in model behaviors and data distributions. To address this, we adopt a conservative approach by removing the top 10\% of test samples with the highest Min-K\%++ scores, thereby minimizing the risk of training data contamination in the test set.

This strategy ensures that our evaluation metrics accurately reflect the model's generalization capabilities on truly unseen data, enhancing the validity of our experimental results.

\subsection{Construction of Training--Test Paired Groups.}
\label{app:pairing_data}
Here we provide additional details on how we construct the training--test paired groups used in Figure~\ref{fig:grokking_curve} to analyze the temporal relation between memorization and generalization. 
The overall procedure is introduced in Section~\ref{sec:local_grokking}; here, we summarize the specific grouping and matching setup for clarity and reproducibility.

\paragraph{Training groups.}
For each pretraining sample, we track its token-level loss across checkpoints and mark the earliest step \(t_i^*\) when the loss remains below a threshold \(\varepsilon\) and deviates by less than \(\delta\) from its final value for all later checkpoints. 
Samples sharing the same \(t_i^*\) form a \textit{training group} denoted as \textit{Pretrain@\({t_i^*}\)}. 
For \((\varepsilon, \delta)\), \(\delta\) is fixed at 0.05 across domains, while \(\varepsilon\) is domain-specific and determined based on the average loss scale of each domain. 
We test \(\varepsilon\) in increments of 0.1 to ensure that the cumulative proportion of memorized samples before the final checkpoint remains between 20--25\% across domains, balancing sufficient data coverage with stable per-domain scaling. 
Ablation studies on the robustness of \((\varepsilon, \delta)\) is provided in Appendix~\ref{app:memorization_ablation}.

\paragraph{Test groups.}
Each test sample’s accuracy trajectory is recorded across checkpoints. 
A test sample is considered to have generalized at the earliest step \(t_j^\#\) where its accuracy remains consistently correct thereafter, with mean accuracy above 0.8 and at most one error in the following checkpoints. 
Samples sharing the same \(t_j^\#\) form a \textit{test group} denoted as \textit{Test@\({t_j^\#}\)}. 

\paragraph{Embedding-based pairing.}
To align semantically related training and test groups, we compute sentence embeddings for all samples using the SentenceTransformer model ``all-MiniLM-L6-v2'' and measure pairwise cosine similarity between groups within each domain. 
The average similarity between every pair of groups forms a similarity matrix, and a one-to-one mapping between training and test groups is obtained using the Hungarian algorithm implemented in SciPy, which maximizes overall semantic similarity. 
The notation \textit{Pretrain@$p$~$\rightarrow$~Test@$q$} thus indicates that the training group memorized at step $p$ is semantically aligned with a test group whose performance stabilizes at step $q$.

\section{Generalization Bound for MoE with Routing Kernel}
\label{app:spectrum}

\subsection{Model Setup and Assumptions}
Consider a MoE model where the router is fixed after pretraining, and the expert networks are trainable.
Let $K$ be the total number of experts in the model.
\begin{itemize}
    \item $f_k(\phi_k, x) \in \mathbb{R}$ be the $k$-th expert's output, with trainable parameters $\phi_k$.
    \item $g_k(x) \in [0,1]$ be the $k$-th routing weight, which is fixed and pre-determined for any input $x$. We assume $\sum_{k=1}^K g_k(x) = 1$ (e.g., from a fixed softmax layer or pre-computed weights).
    \item Full model: $F(\theta, x) = \sum_{k=1}^K g_k(x)f_k(\phi_k, x)$, where $\theta = (\phi_1, \dots, \phi_K)$ is the collection of all trainable expert parameters.
\end{itemize}

\textbf{Key Assumptions:}
\begin{enumerate}

    \item \textbf{NTK Regime for Experts:} Expert parameters $\theta = (\phi_1, \dots, \phi_K)$ are initialized as $\phi_k(0) \sim \mathcal{N}(0, I)$ for each $k$, and remain $\epsilon$-close to initialization ($\|\theta(t)-\theta(0)\| = O(1/\sqrt{\text{width}})$), enabling linearization via Neural Tangent Kernel (NTK). We assume $\phi_k(0)$ are independent across different experts $k$.

    \item \textbf{Fixed Routing:} The routing weights $g_k(x)$ are fixed for all inputs $x$ and do not contain any trainable parameters. They satisfy $\max_k \|g_k\|_\infty \leq 1$.
    
    \item \textbf{Label Noise:} Training labels $y_i = f^*(x_i) + \epsilon_i$ with $\epsilon_i \sim \mathcal{N}(0, \sigma^2)$, independent of $x_i$.
\end{enumerate}

\subsection{Routing Kernel Definition}
\label{app:routing_kernel}
Define the NTK for this MoE configuration. Consistent with the general routing kernel definition in the main text, $\Theta_{\text{route}}(x, x')$ here is specifically instantiated as:
\begin{align*}
    \Theta_{\text{route}}(x, x') &= \mathbb{E}_{\theta(0)}\left[\left\langle \frac{\partial F(\theta(0),x)}{\partial \theta}, \frac{\partial F(\theta(0),x')}{\partial \theta} \right\rangle\right] \\
    &= \sum_{j=1}^K g_j(x) g_j(x') \mathbb{E}_{\phi_j(0)}\left[\left\langle \frac{\partial f_j(\phi_j(0),x)}{\partial \phi_j}, \frac{\partial f_j(\phi_j(0),x')}{\partial \phi_j} \right\rangle\right]
\end{align*}
Let $K_{f_j}(x, x') = \mathbb{E}_{\phi_j(0)}\left[\left\langle \frac{\partial f_j(\phi_j(0),x)}{\partial \phi_j}, \frac{\partial f_j(\phi_j(0),x')}{\partial \phi_j} \right\rangle\right]$ be the NTK for the $j$-th expert network. Then, the Routing Kernel in this fixed-routing scenario is:
\begin{equation*}
    \Theta_{\text{route}}(x, x') = \sum_{j=1}^K g_j(x) g_j(x') K_{f_j}(x, x')
\end{equation*}

The structure of $\Theta_{\text{route}}(x, x')$ in this fixed-routing setup provides crucial insights into how routing influences generalization. It is composed of two primary factors: the gating weight product $g_j(x) g_j(x')$, and the individual expert NTKs $K_{f_j}(x, x')$. The term $g_j(x) g_j(x')$ signifies the importance of the $j$-th expert for both inputs $x$ and $x'$, highlighting that two inputs are considered "similar" by the kernel if they are significantly routed to the \textit{same} experts. If inputs $x$ and $x'$ are routed to distinct sets of experts, their contribution to the sum for a given $j$ will be diminished. Simultaneously, $K_{f_j}(x, x')$ captures the functional similarity learned by the $j$-th expert for inputs $x$ and $x'$. Thus, the overall routing kernel measures a compounded similarity: inputs are similar not only if they are guided through similar pathways (weighted by $g_j(x) g_j(x')$), but also if the specific experts they pass through exhibit similar functional behavior for those inputs ($K_{f_j}(x, x')$). This decomposition underscores how fixed routing patterns (reflected in $g_j(x)$) set the stage for expert specialization and ultimately shape the generalization properties of the MoE model.

\subsection{Main Theorem}

\begin{theorem}[Generalization Bound for MoE Experts]
Under Assumptions 1-3, with probability $1-\delta$ over training samples $(x_i,y_i)_{i=1}^n$, the excess risk satisfies:
\begin{align*}
    \mathbb{E}_{x,y}\left[(F(\theta^*,x) - f^*(x))^2\right] &\leq \underbrace{\frac{C_1 \lambda^2}{n} \bm{y}^\top (\bm{H}^* + \lambda I)^{-2} \bm{y}}_{\text{Bias}} \\ 
    &+ \underbrace{C_2\left(\frac{\sigma^2 \text{Tr}(\bm{H}^*(\bm{H}^* + \lambda I)^{-1})}{n} + \frac{\sigma^2 \log(1/\delta)}{n}\right)}_{\text{Variance + Noise}}
\end{align*}
where $\bm{H}^*_{i,j} = \Theta_{\text{route}}(x_i, x_j)$, $\lambda > 0$ is regularization, and $C_1,C_2$ are constants depending on the properties of $g_k(x)$ and $f_k(x)$. The bound decays as $O\left(\frac{1}{n}\right)$ when $\lambda = \Theta(1)$.
\end{theorem}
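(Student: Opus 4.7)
My plan is to reduce the MoE training dynamics to a kernel ridge regression (KRR) problem in the RKHS of $\Theta_{\text{route}}$, then apply a standard bias--variance decomposition from the kernel-regression literature. First I would invoke the NTK regime (Assumption~1): since each expert's parameters stay $O(1/\sqrt{\text{width}})$-close to initialization, the MoE output admits a uniform linearization $F(\theta,x) \approx F(\theta(0),x) + \langle \nabla_\theta F(\theta(0),x),\, \theta-\theta(0)\rangle$. Combined with the fixed-routing assumption, $\nabla_{\phi_j} F(\theta(0),x) = g_j(x)\,\nabla_{\phi_j} f_j(\phi_j(0),x)$, so the kernel induced by $F$ on the trainable parameters at initialization is exactly $\Theta_{\text{route}}(x,x') = \sum_{j=1}^K g_j(x)g_j(x') K_{f_j}(x,x')$, where independence of $\phi_j(0)$ across experts kills the cross terms. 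This reproduces the derivation of Section~\ref{app:routing_kernel}.

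Next I would cast $\ell_2$-regularized training as KRR with regularization $\lambda$ and Gram matrix $\bm{H}^*$. The resulting predictor takes the closed form $\hat{F}(x) = \bm{\Theta}_x^\top (\bm{H}^*+\lambda I)^{-1}\bm{y}$, where $\bm{\Theta}_x$ has entries $\Theta_{\text{route}}(x,x_i)$. Writing $\bm{y}=\bm{f}^*+\bm{\epsilon}$ with $\bm{f}^*_i=f^*(x_i)$ and $\bm{\epsilon}\sim\mathcal{N}(0,\sigma^2 I)$ by Assumption~3, I would split the excess risk into a signal term depending only on $\bm{f}^*$ and a noise term depending only on $\bm{\epsilon}$, exploiting linearity of $\hat{F}$ in $\bm{y}$.

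For the bias, I would use the standard Tikhonov identity for KRR to obtain $\lambda^2\,\bm{y}^\top(\bm{H}^*+\lambda I)^{-2}\bm{y}/n$, which measures how well $f^*$ is represented in the routing-kernel RKHS and vanishes when $f^*$ lies perfectly in its span. For the variance, I would use the trace identity $\mathbb{E}_{\bm{\epsilon}}\bigl[\|(\bm{H}^*+\lambda I)^{-1}(\bm{H}^*)^{1/2}\bm{\epsilon}\|^2\bigr] = \sigma^2\,\mathrm{Tr}\bigl(\bm{H}^*(\bm{H}^*+\lambda I)^{-1}\bigr)$, recognizing the trace as the effective dimension of the routing kernel at scale $\lambda$. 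This makes explicit why a sudden spectral collapse of $\bm{H}^*$---precisely the phenomenon the empirical metrics detect during grokking---causes the variance term to drop sharply. Finally, a Hanson--Wright-style concentration bound for Gaussian quadratic forms would upgrade the in-expectation bound to the stated high-probability form, producing the $\sigma^2\log(1/\delta)/n$ tail term.

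\textbf{Main obstacle.} The hard part will be controlling the NTK linearization error so that the KRR analysis transfers faithfully to the true nonlinear MoE predictor. One needs width-dependent bounds on $\|\theta(t)-\theta(0)\|$ that hold uniformly along the training trajectory, and the sum-of-products structure $\sum_j g_j(x)g_j(x') K_{f_j}(x,x')$ can amplify residual errors through the gating weights $g_j(x)$. Absorbing these higher-order terms into the constants $C_1, C_2$---while ensuring they remain independent of $n$---is where the bulk of the technical work lives; the pure KRR bias--variance computation, by contrast, is fairly routine once the kernel decomposition is in place.
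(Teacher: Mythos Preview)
Your proposal is correct and follows essentially the same route as the paper: NTK linearization of the MoE around initialization, reduction to kernel ridge regression with Gram matrix $\bm{H}^*$, then the standard bias--variance split yielding the $\lambda^2\bm{y}^\top(\bm{H}^*+\lambda I)^{-2}\bm{y}$ bias, the $\sigma^2\mathrm{Tr}(\bm{H}^*(\bm{H}^*+\lambda I)^{-1})$ variance, and a concentration argument for the $\log(1/\delta)$ tail. If anything you are more explicit than the paper about the linearization-error control and about naming Hanson--Wright for the tail bound; the paper simply declares the quadratic remainder $O(1/\text{width})$ and invokes ``standard concentration inequalities.''
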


\begin{proof} 

Under NTK regime, the model can be linearized around initialization:
\begin{align*}
    F(\theta,x) &\approx F(\theta(0),x) + \left\langle \nabla_\theta F(\theta(0),x), \theta - \theta(0) \right\rangle \\
    &= \underbrace{F(\theta(0),x)}_{\text{Initial Output}} + \Phi(x)^\top \Delta\theta
\end{align*}
where $\Phi(x) = \nabla_\theta F(\theta(0),x)$ and $\Delta\theta = \theta - \theta(0)$. The quadratic error term $\|\Delta\theta\|^2$ is $O(1/\text{width})$ and thus negligible.

Define centered labels $\tilde{y}_i = y_i - F(\theta(0),x_i)$. The problem reduces to ridge regression:
\[
\min_{\Delta\theta} \frac{1}{2}\sum_{i=1}^n \left(\Phi(x_i)^\top \Delta\theta - \tilde{y}_i\right)^2 + \frac{\lambda}{2}\|\Delta\theta\|^2
\]
with closed-form solution:
\[
\Delta\theta^* = (\Phi \Phi^\top + \lambda I)^{-1} \Phi \bm{\tilde{y}} = \Phi^\top (\bm{H}^* + \lambda I)^{-1} \bm{\tilde{y}}
\]
where $\bm{H}^* = \Phi \Phi^\top$ is the kernel matrix with $\bm{H}^*_{i,j} = \Theta_{\text{route}}(x_i, x_j)$.

The excess risk decomposes as:
\[
\mathbb{E}[(F(\theta^*,x) - f^*(x))^2] = \underbrace{(\mathbb{E}[\hat{f}(x)] - f^*(x))^2}_{\text{Bias}^2} + \underbrace{\mathbb{E}[(\hat{f}(x) - \mathbb{E}[\hat{f}(x)])^2]}_{\text{Variance}} + \sigma^2
\]

\begin{itemize}
    \item \textbf{Bias Term}: 
    \[
    \text{Bias}^2 \leq \frac{\lambda^2}{n^2} \bm{y}^\top (\bm{H}^* + \lambda I)^{-2} \bm{y} \leq \frac{C_1 \lambda^2}{n} \bm{y}^\top (\bm{H}^* + \lambda I)^{-2} \bm{y}
    \]
    where $C_1$ absorbs constants related to the magnitudes of $g_k(x)$ and expert gradients.
    
    \item \textbf{Variance Term}: The parameter covariance is $\text{Cov}(\Delta\theta^*) = \sigma^2 (\Phi \Phi^\top + \lambda I)^{-1} \Phi \Phi^\top (\Phi \Phi^\top + \lambda I)^{-1}$. For prediction variance:
    \[
    \text{Variance} = \mathbb{E}_x[\Phi(x)^\top \text{Cov}(\Delta\theta^*) \Phi(x)] = \frac{\sigma^2}{n} \text{Tr}(\bm{H}^* (\bm{H}^* + \lambda I)^{-1})
    \]
    
    \item \textbf{Confidence Term}: By standard concentration inequalities (e.g., for sub-Gaussian noise),
    \[
    \mathbb{P}\left(\left|\frac{1}{n}\sum_{i=1}^n \epsilon_i^2 - \sigma^2\right| \geq t\right) \leq e^{-nt^2/(2\sigma^4)} \implies \frac{\sigma^2 \log(1/\delta)}{n}
    \]
\end{itemize}

Collecting all components and adjusting constants $C_1, C_2$ completes the proof.
\end{proof}

\subsection{Connecting Edit Distance to Effective Dimension}
\label{app:connection}

The bound shows that simple (low‐dimensional) routing kernels generalize better. We now empirically test how routing pattern complexity (via edit distance) correlates with this effective dimension $d_{\rm eff} = \text{Tr}(\bm{H}^*(\bm{H}^* + \lambda I)^{-1})$.

Our setup involves a lightweight MoE architecture with 16 experts and a 100-dimensional input space. Input data are synthetic Gaussian clusters, providing a controlled environment to analyze pre-defined routing strategies.
For each experimental trial, 200 samples are split into training and validation sets. We train a one-layer MoE model with ReLU activations where only expert networks are trainable; the router's weights are initialized once and then frozen. To explore diverse fixed routing patterns, we perform multiple independent runs, \textit{each starting with a different random initialization of the router}.

Following training, we extract the average edit distance of the pre-determined expert routing paths and the effective dimension. 
As illustrated in Figure~\ref{fig:edit_dim_corr}, we observe a strong positive correlation between these two quantities. This empirically demonstrates that the structural properties of fixed routing, measured by edit distance, meaningfully impact the complexity of the function space learned by the experts, as predicted by $d_{\rm eff}$. This alignment supports edit distance as a valuable diagnostic for MoE generalization.

\begin{figure}[!htbp]
    \begin{center}
    \includegraphics[width=0.5\textwidth]{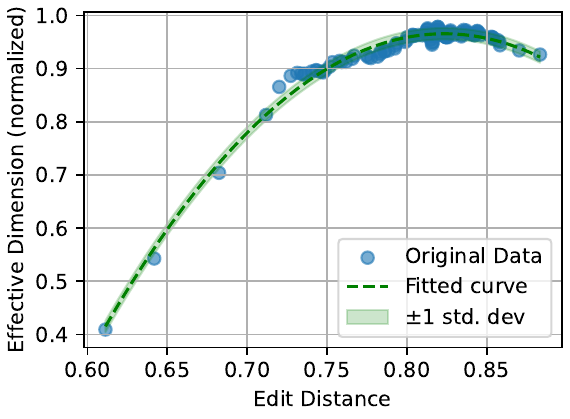}
  \end{center}
  \caption{\textbf{Correlation of Edit Distance with Effective Dimension.} Each point represents an independent experimental run where the router is initialized with a different random seed and then fixed for expert training. This varying initialization leads to diverse fixed routing patterns. We observe a strong positive correlation. This suggests that the structural diversity of pre-defined routing significantly impacts the complexity of the function space learned by experts.\looseness-1
  }
  \label{fig:edit_dim_corr}
\end{figure}

\section{Routing Dynamics Analysis}
\label{app:routing}

To investigate whether routing evolution arises primarily from router-layer updates or other weight updates, we analyze convergence speed and stability across parameter types.

\paragraph{Results.}  
Table~\ref{tab:routing_dynamics} reports convergence speed (magnitude drop-off over time) and stability (consistency of updates). Router weights adapt fastest while expert weights stabilize most gradually, indicating a division of labor: routing decisions are steered by fast-updating routers atop slow-evolving expert features. Attention layers lie in between.

\begin{table}[h]
    \centering
    \caption{Convergence speed and stability across parameter types. Higher values indicate faster convergence or greater stability.}
    \label{tab:routing_dynamics}

    \begin{tabular}{lcc}
        \toprule
        \textbf{Parameter Type} & \textbf{Convergence Speed $\uparrow$} & \textbf{Stability $\uparrow$} \\
        \midrule
        Router    & \textbf{0.151} (fastest) & 5.63 \\
        Expert    & 0.063 (slowest)          & \textbf{7.32} \\
        Attention & 0.083                    & 6.01 \\
        \bottomrule
    \end{tabular}
\end{table}

\paragraph{Layerwise Routing Dynamics.}  
To localize routing evolution, we compute the edit distance of routing assignments across layers. Layers 0--3 exhibit the most post-convergence change, suggesting that early layers remain active in refining pathways.

\paragraph{Metric Computation.}  
We periodically snapshot parameter L2 norms during training. For each parameter type, we compute the mean absolute norm change between checkpoints, normalized by average scale.  
\emph{Convergence speed} is defined as the drop in change magnitude from early to late training, while \emph{stability} is the inverse standard deviation of these normalized changes.  
This simple but robust analysis consistently reveals router–expert asymmetries.

\section{Analysis}

\subsection{Effect of Routing-Weight Threshold on Pathway Dynamics}
\label{app:ablation_threshold}

To examine the robustness of our findings with respect to the routing-weight threshold used for pathway construction, we repeat the analysis in Fig.~\ref{fig:edit-distance} under different cumulative routing-weight thresholds ranging from 0.5 to 0.8 (interval 0.1). 
This threshold determines how many experts are considered \textit{active} in each layer when forming the pathway sequence: 
a higher threshold includes more experts, while a lower threshold selects only the most dominant ones.

\begin{figure}[!htbp]
    \begin{center}
    \includegraphics[width=1.0\textwidth]{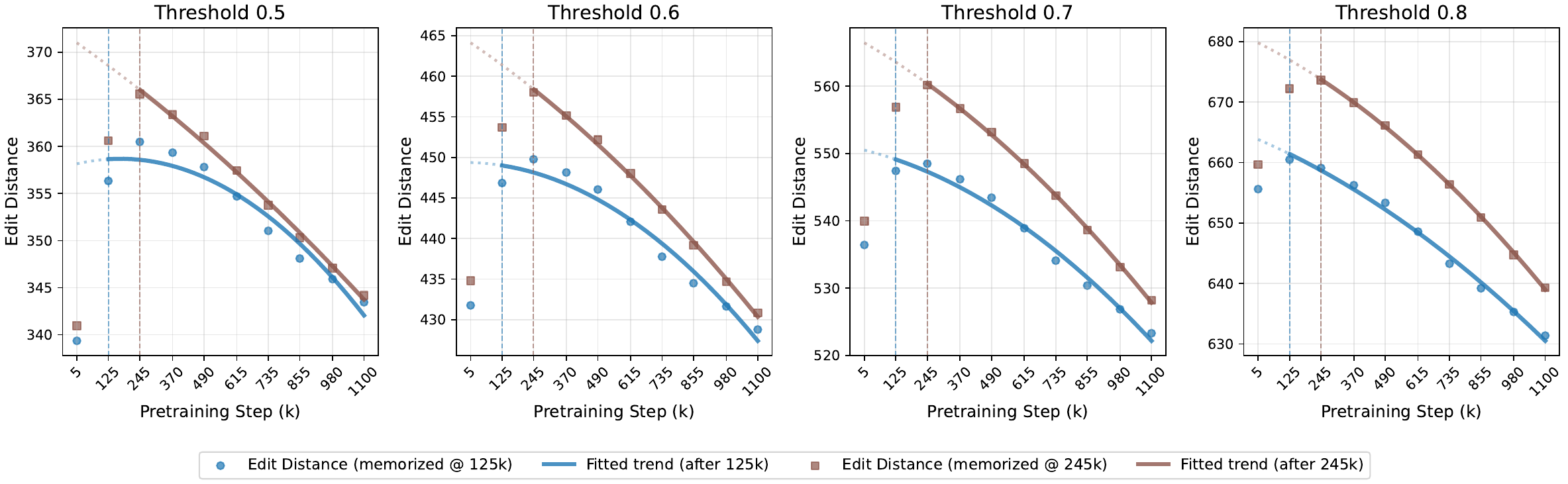}
  \end{center}
  \caption{\textbf{Effect of routing-weight threshold on pathway edit distance on domain-specific QA tasks.}
Each panel shows the average pathway edit distance for this domain under a different cumulative routing-weight threshold (0.5–0.8, interval 0.1). 
Higher thresholds include more active experts, leading to longer absolute distances, but all settings exhibit the same qualitative pattern:
pathway edit distance decreases consistently after memorization, confirming that the observed dynamics are robust to the threshold choice.
    \looseness-1
  }
  \label{fig:threshold_ablation}
\end{figure}

Figure~\ref{fig:threshold_ablation} presents the results for the domain-specific QA tasks. 
Although the absolute values of edit distance increase with larger thresholds---reflecting the longer sequences formed when more experts are included---the qualitative trends remain consistent across all settings. 
For every threshold, the pathway edit distance decreases after memorization (marked by vertical dashed lines), and the decrease becomes more pronounced at later checkpoints. 

These results demonstrate that our conclusions are \textbf{robust to the choice of routing-weight threshold}. 
The observed reduction in pathway complexity after memorization, corresponding to the \textbf{emergence of shared pathways}, persists across all threshold settings.

\subsection{Effect of LoRA Rank on Pathway–Generalization Correlation}
\label{app:lora_rank_ablation}

To examine whether the correlation between pathway metrics and generalization depends on the configuration of instruction tuning, we conduct an ablation on the LoRA rank used during lightweight finetuning. 
As described in Appendix~\ref{app:exp_detail}, the default setup adopts a rank of 32 for all domains. 
Here, we additionally train a LoRA adapter with rank 64 on the \textit{commonsense} domain, following the same data, training steps, and hyperparameters as in the main experiments.

Table~\ref{tab:lora_rank_ablation} reports the correlation between pathway complexity metrics and downstream test accuracy under the two LoRA ranks. 
The results show that increasing the rank from 32 to 64 yields nearly identical correlations for both \textit{Pathway Distance} and \textit{Pathway Consistency}, while training and validation losses remain weakly correlated. 
This consistency indicates that the observed relationships between pathway metrics and generalization are \textbf{robust to LoRA configuration} and primarily reflect intrinsic pretraining-phase dynamics rather than artifacts of instruction tuning.

\begin{table}[ht]
\centering
\caption{\textbf{Effect of LoRA Rank on the Correlation Between Pathway Metrics and Generalization} (Commonsense reasoning domain). 
The results show that increasing the LoRA rank from 32 to 64 yields nearly identical correlations, confirming that the observed trends are robust to LoRA configuration. 
$p$-value: 
\colorbox{sig1}{\rule{0pt}{6pt}\scriptsize $p<0.1$},
\colorbox{sig2}{\rule{0pt}{6pt}\scriptsize $p<0.05$},
\colorbox{sig3}{\rule{0pt}{6pt}\scriptsize $p<0.01$},
\colorbox{nonsig1}{\rule{0pt}{6pt}\scriptsize $p>0.1$},
\colorbox{nonsig2}{\rule{0pt}{6pt}\scriptsize $p>0.5$},
\colorbox{nonsig3}{\rule{0pt}{6pt}\scriptsize $p>0.8$}.
Preferred correlation: 
\colorbox{poscorr}{\rule{0pt}{6pt}\scriptsize positive}, 
\colorbox{negcorr}{\rule{0pt}{6pt}\scriptsize negative}.\looseness-1}

\resizebox{0.75\textwidth}{!}{%
\begin{tabular}{lcccc}
\toprule
\multirow{2}{*}{Metric} & \multicolumn{2}{c}{LoRA Rank = 32} & \multicolumn{2}{c}{LoRA Rank = 64} \\
 & Pearson & Spearman & Pearson & Spearman \\
\midrule
\negmetric{Pathway Distance (pairwise)} 
  & \corrcell{sig2}{-0.9821} & \corrcell{sig3}{-0.9747}
  & \corrcell{sig3}{-0.9927} & \corrcell{sig1}{-0.9487} \\
\posmetric{Pathway Consistency (sample-wise)} 
  & \corrcell{sig3}{0.9804} & \corrcell{sig3}{0.9747}
  & \corrcell{sig3}{0.9559} & \corrcell{sig2}{0.9487} \\
\midrule
\negmetric{Training Loss} 
  & \corrcell{nonsig1}{0.6427} & \corrcell{nonsig3}{0.4104}
  & \corrcell{nonsig1}{0.6197} & \corrcell{nonsig3}{0.4617} \\
\negmetric{Training Loss (moving average)} 
  & \corrcell{nonsig1}{0.7931} & \corrcell{nonsig1}{0.6669}
  & \corrcell{nonsig1}{0.6871} & \corrcell{nonsig1}{0.2052} \\
\negmetric{Validation Loss} 
  & \corrcell{nonsig1}{-0.5752} & \corrcell{nonsig1}{-0.4617}
  & \corrcell{nonsig1}{-0.6631} & \corrcell{nonsig3}{-0.4104} \\
\bottomrule
\end{tabular}}
\label{tab:lora_rank_ablation}
\end{table}

\subsection{Effect of Memorization Thresholds $\varepsilon$ and $\delta$}
\label{app:memorization_ablation}

We conduct an ablation study on the two hyperparameters that define the memorization criterion in Section~\ref{subsec:definition}: the loss threshold $\varepsilon$ and the stability threshold $\delta$. 
Specifically, for a sample $x_i$ with loss $\ell_i(\theta_t)$ at pretraining step $t$, it is considered \emph{memorized} from step $t_i^*$ if $\ell_i(\theta_t) \leq \varepsilon$ and $|\ell_i(\theta_t) - \ell_i(\theta_T)| \leq \delta$ for all $t \geq t_i^*$. 
We evaluate how sensitive the pathway trends are to these two parameters using the domain-specific QA task.

\paragraph{Varying $\varepsilon$.}
Figure~\ref{fig:ablation_eps} shows results when fixing $\delta = 0.05$ and varying $\varepsilon$ from 1.5 to 2.0.
Across all settings, the pathway edit distance after memorization consistently decreases with training, and the overall trend remains stable—indicating that $\varepsilon$ only shifts the point where samples are considered memorized but does not affect the post-memorization dynamics.

\paragraph{Varying $\delta$.}
Figure~\ref{fig:ablation_delta} reports the results when fixing $\varepsilon = 2.0$ and varying $\delta$ between 0.03 and 0.05.
Similarly, the decreasing trend of edit distance after memorization persists, and the fitted trajectories show negligible variation across $\delta$ values.

This confirms that the proposed pathway-based observations are \textbf{robust to the choice of $\varepsilon$ and $\delta$}.

\begin{figure}[!htbp]
    \centering
    \includegraphics[width=\textwidth]{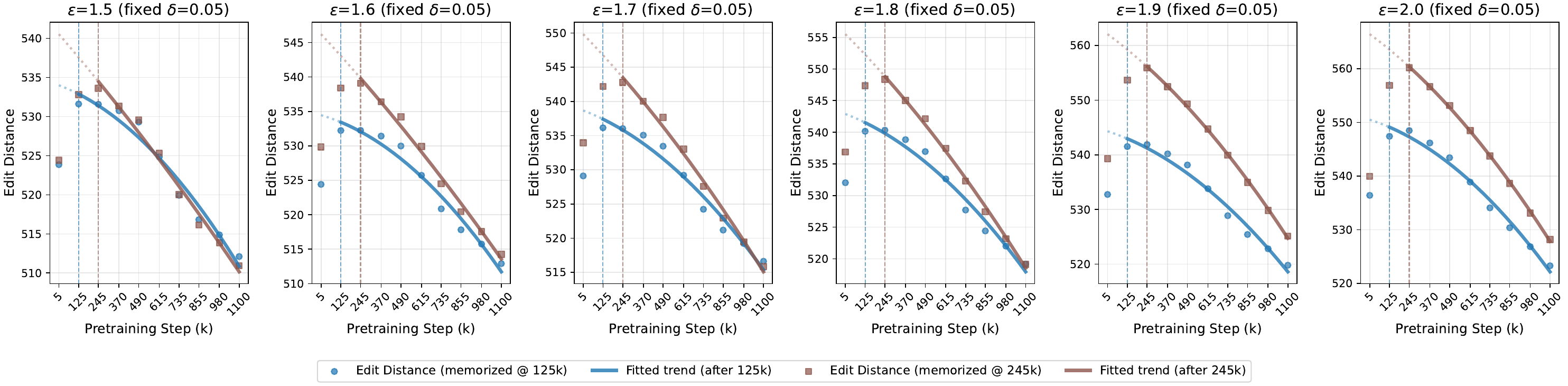}
    \caption{\textbf{Effect of the loss threshold $\varepsilon$ on pathway dynamics.} 
    The stability threshold is fixed at $\delta=0.05$. 
    All curves show a consistent post-memorization decrease in edit distance, indicating that the results are robust to $\varepsilon$.}
    \label{fig:ablation_eps}
\end{figure}

\begin{figure}[!htbp]
    \centering
    \includegraphics[width=0.75\textwidth]{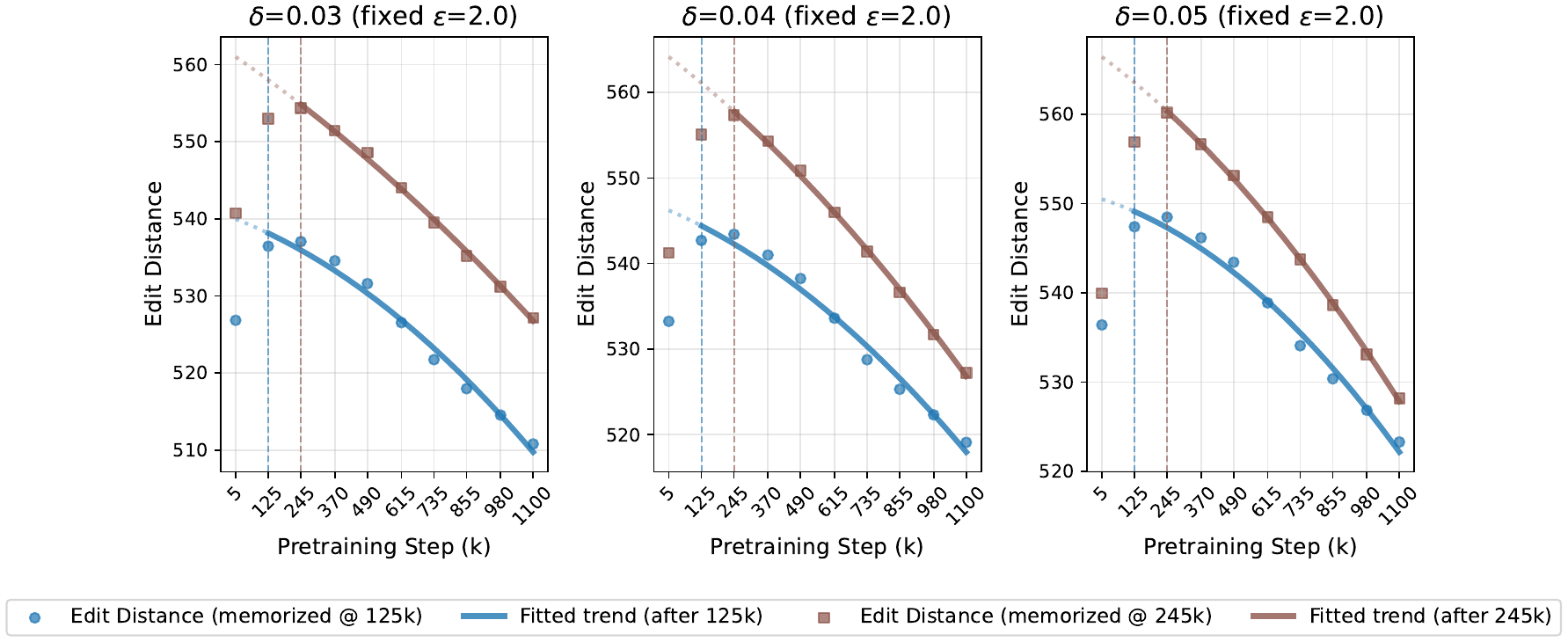}
    \caption{\textbf{Effect of the stability threshold $\delta$ on pathway dynamics.} 
    The loss threshold is fixed at $\varepsilon=2.0$. 
    The same decreasing trend holds across different $\delta$ values, confirming robustness to this parameter.}
    \label{fig:ablation_delta}
\end{figure}

\subsection{Cross-Domain Analysis of Pathway Dynamics}
\label{app:cross-domain-analysis}

To verify that the observed changes in pathway edit distance are due to the emergence of shared pathways within specific domains rather than uniform effects across all data, such as load balancing or regularization, we conduct an additional analysis focusing on the \textit{cross-domain} samples. Using the same cumulative routing-weight threshold (0.7) as in the main experiments, we compute average pathway edit distances for samples drawn from two different domains across checkpoints---specifically, math~$\times$~code---under identical configurations.

\begin{figure}[!htbp]
    \begin{center}
    \includegraphics[width=0.5\textwidth]{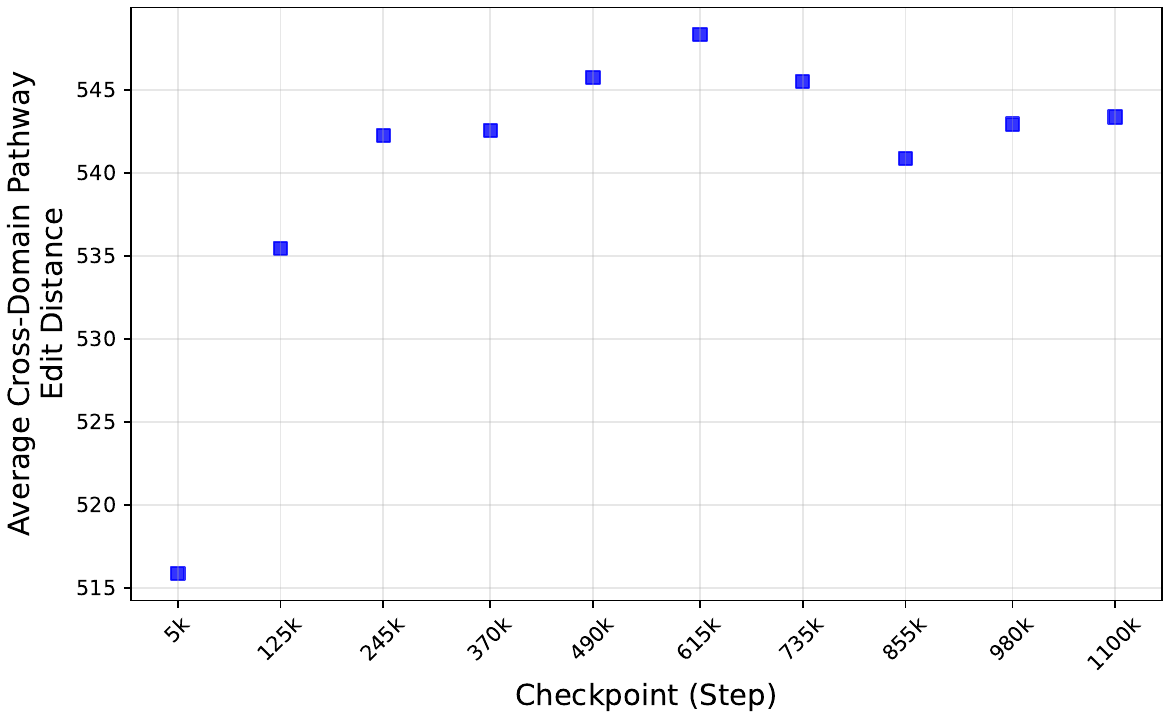}
  \end{center}
  \caption{\textbf{Cross-domain pathway edit distance (math~$\times$~code).}
    The figure shows the global trend of average pathway edit distance during pretraining when computed across two different domains.
    The cross-domain distances fluctuate irregularly, exhibiting both increases and decreases but no systematic downward trend.
    This contrasts with the localized decreases observed within domains (Fig.~\ref{fig:edit-distance}), 
    indicating that pathway reorganization occurs locally among semantically related data rather than globally across domains.
    \looseness-1
  }
  \label{fig:cross_domain_edit_distance}
\end{figure}

As shown in Fig.~\ref{fig:cross_domain_edit_distance}, the cross-domain edit distance fluctuates throughout pretraining, showing irregular increases and decreases without any consistent downward trend. In contrast, Fig.~\ref{fig:edit-distance} demonstrates that within-domain pathway distances decrease only for groups of samples that become memorized at different points in training, with each domain exhibiting its own localized transition rather than a global drop. These observations confirm that the structural changes we observe are \textbf{domain- and group-specific}, reflecting the \textbf{emergence of shared pathways among semantically related samples} rather than a global effect of routing behavior.

\subsection{Relationship Between Routing Entropy and Edit Distance}
\label{app:entropy_editdistance}

We investigate whether changes in routing diversity could trivially explain the observed trends in pathway edit distance. Specifically, we test whether routing entropy—high when expert usage is balanced and low when a few experts dominate—can drive edit distance dynamics.  

For each sample, routing entropy is computed \emph{per layer} from the full routing-weight vector (no thresholding or selection) and averaged across layers. 
For each sample pair, we take the mean of the two samples' entropies and compute their pathway edit distance using a cumulative routing-weight threshold of 0.8, which provides sufficient expert coverage for analysis.

We sample 141{,}530 pairs across checkpoints spanning both the \textit{math} and \textit{code} domains, including within-domain and cross-domain pairs. 
Figure~\ref{fig:entropy_editdistance_hexbin} shows a weak correlation between average routing entropy and edit distance ($r=0.24$, explaining about $6\%$ of the variance) with no clear monotonic or directional trend. 
This indicates that neither more balanced routing (higher entropy) nor more concentrated routing (lower entropy) systematically reduces pathway distance. 
Therefore, the decrease in within-domain pathway distance observed after memorization is unlikely to result from overall changes in routing balance; instead, it reflects domain- and group-specific reorganization of computation.

\begin{figure}[!ht]
    \centering
    \includegraphics[width=0.8\textwidth]{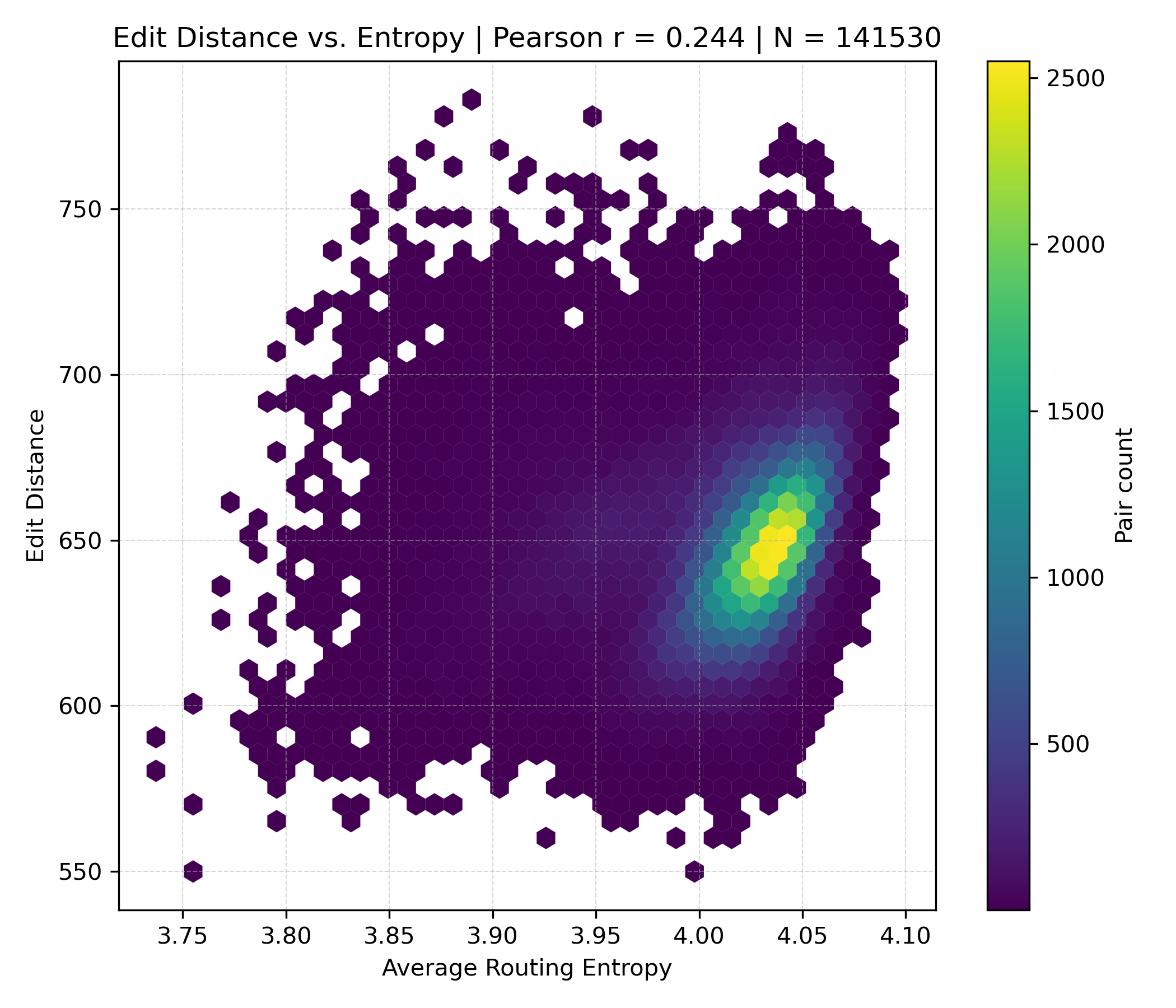}
    \caption{\textbf{Edit distance vs.\ routing entropy.}
    Each hexagon aggregates sample pairs ($N=141{,}530$) drawn across checkpoints and across the \textit{math} and \textit{code} domains.
    Routing entropy is computed from the \emph{full} routing-weight vectors (no thresholding); pathway edit distance uses pathways built with a cumulative cutoff of $0.8$.
    The weak correlation ($r=0.24$) indicates that routing balance does not systematically determine pathway similarity.}
    \label{fig:entropy_editdistance_hexbin}
\end{figure}

\subsection{Consistent Findings on another MoE (nanoMoE)}
\label{sec:nanoMoE}

To further validate the generality of our findings beyond OLMoE, we conduct additional experiments on a 55M-parameter MoE model, \textbf{nanoMoE}\footnote{\url{https://github.com/wolfecameron/nanoMoE}}.
NanoMoE provides a small, fully trainable MoE architecture for which we can obtain complete pretraining checkpoints, making it suitable for analysis of routing dynamics.

\textbf{Training setup.}
We train nanoMoE on $\sim$25B tokens of the OpenWebText dataset~\citep{pile} for 50k iterations on two NVIDIA H100 GPUs and uniformly sample ten checkpoints for analysis. 
OpenWebText is a mixed-domain web corpus with a heavy emphasis on commonsense and general knowledge content. 
To approximate a domain setting, we construct a commonsense subset by filtering 10,000 randomly sampled training samples using Qwen3-8B~\citep{yang2025qwen3} to select those semantically related to commonsense reasoning, yielding 7,857 samples in total.
Following the same methodology as our main experiments, we identify memorized samples based on token-level loss stability and compute pathway dynamics for this subset across checkpoints.

\textbf{Pathway-metric dynamics.}
Consistent with our large-scale results, we observe that after memorization, \textbf{pathway edit distance decreases while pathway consistency increases} as shown in Figure~\ref{fig:nanomoe_distance} and ~\ref{fig:nanomoe_consistency}.
These trends mirror the post-memorization reorganization we identify in OLMoE, demonstrating that even the smaller MoE model reorganizes routing toward more structured, shared computation after the loss has already converged.

\begin{figure}[!ht]
    \begin{center}
\includegraphics[width=1.0\textwidth]{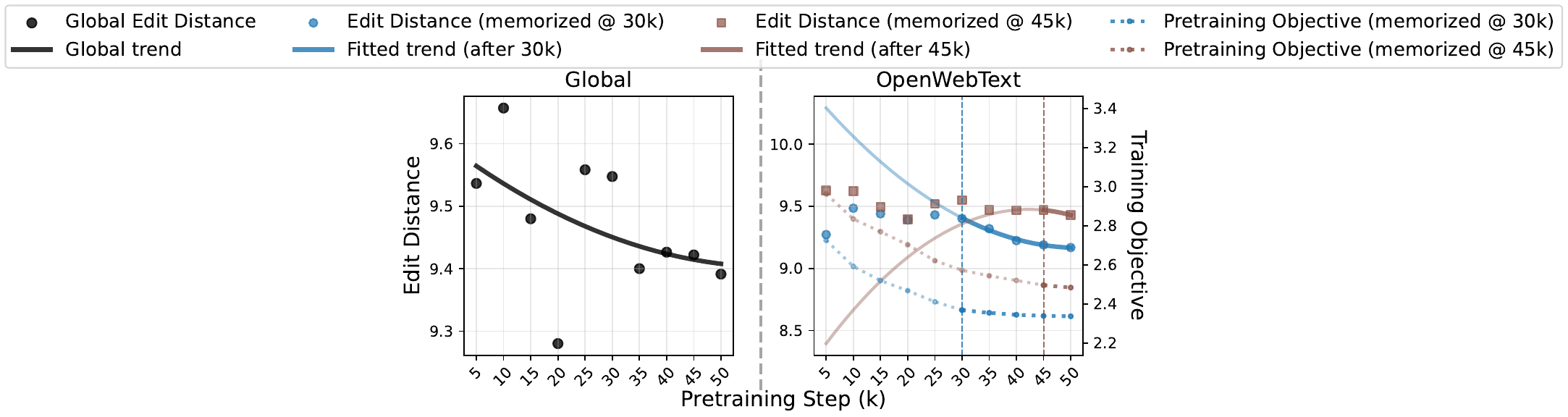}
  \end{center}
  \caption{
  \textbf{Pathway edit distance and pretraining objective on nanoMoE during pretraining.} The left panel reports the overall pathway edit-distance trajectory across checkpoints
without grouping samples by their memorization steps. The right panel instead visualizes pathway dynamics for two memorized data groups
(memorized at 30k and 45k steps). After their respective memorization points, both groups exhibit a consistent
post-memorization decline in edit distance despite a plateaued pretraining objective, indicating that nanoMoE
continues to reorganize routing pathways toward more structured and shared computation. This mirrors the
memorization-to-generalization transition observed in large-scale OLMoE.
    \looseness-1
  }
  \label{fig:nanomoe_distance}
\end{figure}

\begin{figure}[!ht]
    \begin{center}
\includegraphics[width=1.0\textwidth]{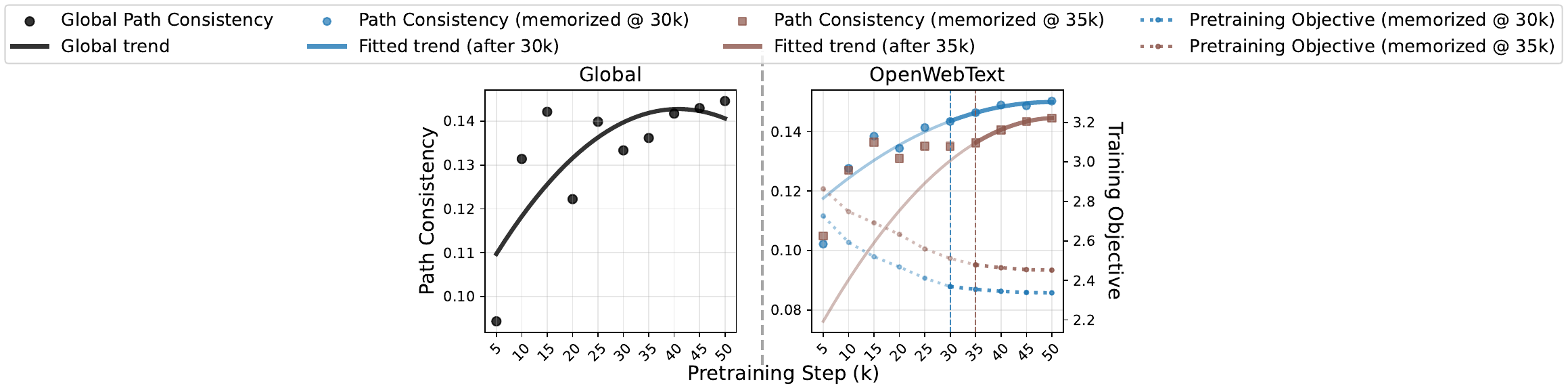}
  \end{center}
  \caption{
  \textbf{Evolution of pathway consistency in nanoMoE.}
The left panel reports the overall progression of pathway consistency across checkpoints. The right panel further analyzes two memorized data groups (memorized at 30k and 35k steps), showing that once memorization is achieved, each group undergoes continued refinement of its layer-to-layer routing transitions.
    \looseness-1
  }
  \label{fig:nanomoe_consistency}
\end{figure}

\textbf{Correlation with downstream performance.}
We further evaluate all ten checkpoints in a zero-shot setting on standard commonsense reasoning benchmarks 
(ARC-Easy/Challenge~\citep{clark2018think}, HellaSwag~\citep{zellers2019hellaswag}, and OpenBookQA~\citep{mihaylov2018can}). 
Zero-shot evaluation is justified because models pretrained on OpenWebText, including GPT-2~\citep{radford2019language}, have been shown to achieve strong zero-shot performance on commonsense benchmarks.
As in the large-scale setting, both pathway metrics exhibit strong correlations with benchmark accuracy ($|r| \approx 0.8$--$1.0$), as reported in Table~\ref{tab:nanome_correlation}, substantially outperforming training loss or its moving average. 
This confirms that routing dynamics remain tightly coupled with downstream generalization.

The nanoMoE results reinforce our central claim: 
\textit{post-memorization routing reorganization is not an artifact of model scale, data size, or specific MoE implementation}. 
Instead, it reflects an inherent property of MoE training dynamics. 
Combined with the large-scale OLMoE study, these findings establish pathway metrics 
as stable, architecture-intrinsic indicators of the memorization-to-generalization transition.

\begin{table}[ht]
\centering
\caption{\textbf{Correlation Between Pathway Metrics and Downstream Accuracy on nanoMoE} 
across four commonsense benchmarks (ARC-Easy, ARC-Challenge, HellaSwag, OpenBookQA). $p$-value: 
\colorbox{sig1}{\rule{0pt}{6pt}\scriptsize $p<0.1$},
\colorbox{sig2}{\rule{0pt}{6pt}\scriptsize $p<0.05$},
\colorbox{sig3}{\rule{0pt}{6pt}\scriptsize $p<0.01$},
\colorbox{nonsig1}{\rule{0pt}{6pt}\scriptsize $p>0.1$},
\colorbox{nonsig2}{\rule{0pt}{6pt}\scriptsize $p>0.5$},
\colorbox{nonsig3}{\rule{0pt}{6pt}\scriptsize $p>0.8$}.
Preferred correlation: 
\colorbox{poscorr}{\rule{0pt}{6pt}\scriptsize positive}, 
\colorbox{negcorr}{\rule{0pt}{6pt}\scriptsize negative}.}

\resizebox{\textwidth}{!}{%
\begin{tabular}{lcccccccc}
\toprule
\multirow{2}{*}{Metric} 
& \multicolumn{2}{c}{ARC-Easy} 
& \multicolumn{2}{c}{ARC-Challenge} 
& \multicolumn{2}{c}{HellaSwag} 
& \multicolumn{2}{c}{OpenBookQA}  \\
& Pearson & Spearman 
& Pearson & Spearman 
& Pearson & Spearman 
& Pearson & Spearman \\
\midrule

\negmetric{Pathway Distance (pairwise)}
  & \corrcell{sig1}{-0.8113} & \corrcell{sig3}{-1.0000} & \corrcell{sig3}{-1.0000} & \corrcell{sig3}{-1.0000} & \corrcell{sig1}{-0.9713} & \corrcell{sig3}{-1.0000} & \corrcell{sig1}{-0.8366} & \corrcell{sig2}{-0.9000} \\

\posmetric{Pathway Consistency (sample-wise)}
  & \corrcell{sig1}{0.9416} & \corrcell{sig3}{1.0000} & \corrcell{sig2}{0.8818} & \corrcell{sig3}{1.0000} & \corrcell{sig1}{0.9234} & \corrcell{sig1}{0.8000} & \corrcell{sig2}{0.9979} & \corrcell{sig3}{1.0000} \\

\negmetric{Training Loss}
  & \corrcell{nonsig2}{0.3301} & \corrcell{nonsig3}{-0.1000} & \corrcell{nonsig1}{-0.7799} & \corrcell{nonsig1}{-0.5000} & \corrcell{nonsig2}{-0.2182} & \corrcell{nonsig3}{-0.2000} & \corrcell{nonsig1}{0.5984} & \corrcell{nonsig1}{0.8000} \\

\negmetric{Training Loss (moving avg.)}
  & \corrcell{nonsig2}{0.4664} & \corrcell{nonsig2}{0.4000} & \corrcell{nonsig2}{-0.6078} & \corrcell{nonsig2}{-0.8000} & \corrcell{nonsig2}{-0.3580} & \corrcell{nonsig3}{0.2000} & \corrcell{nonsig1}{-0.8655} & \corrcell{nonsig2}{-0.4000} \\

\bottomrule
\end{tabular}}
\label{tab:nanome_correlation}
\end{table}

\section*{LLM Usage Statement}
In preparing this manuscript, we used LLM solely as a writing assistance tool. 
Specifically, the model was employed to aid in grammar correction, sentence refinement, and improving clarity of exposition. 
It did not contribute to research ideation, experimental design, data analysis, or interpretation of results.

\end{document}